\theoremstyle{plain}
\newtheorem{theorem}{Theorem}[section]
\theoremstyle{definition}
\newtheorem{definition}[theorem]{Definition}
\theoremstyle{remark}
\definecolor{label2}{RGB}{255, 230, 149}
\definecolor{duong}{RGB}{0, 0, 0}
\definecolor{khoi}{RGB}{0, 0, 0}
\newcommand{\vizsizelambda}{0.24}
\newcommand{\vizsizeB}{0.23}
\newcommand{\vizsizeweight}{0.45}
\begin{document}

\title{Revisiting LARS for Large Batch Training Generalization of Neural Networks}

\author{Khoi Do$^*$, Duong Nguyen$^*$, Hoa Nguyen, Long Tran-Thanh, Nguyen-Hoang Tran, Quoc-Viet Pham

\thanks{This work has been submitted to the IEEE for possible publication. Copyright may be transferred without notice, after which this version may no longer be accessible.}
\thanks{$*$ means equal contribution}
\thanks{Khoi Do and Hoa Nguyen were with the School of Electrical and Electronics Engineering, Hanoi University of Science and Technology.}
\thanks{Duong Nguyen was with Pusan National University.}
\thanks{Long Tran-Thanh was with the University of Warwick.}
\thanks{Nguyen-Hoang Tran was with the University of Sydney.}
\thanks{Quoc-Viet Pham was with Trinity College of Dublin.}
}

\markboth{Journal of IEEE Transactions on Artificial Intelligence, Vol. 00, No. 0, Month 2024}
{First A. Author \MakeLowercase{\textit{et al.}}: Bare Demo of IEEEtai.cls for IEEE Journals of IEEE Transactions on Artificial Intelligence}

\maketitle

\begin{abstract}
This paper investigates Large Batch Training techniques using layer-wise adaptive scaling ratio (LARS) across diverse settings. In particular, we first show that a state-of-the-art technique, called LARS with warm-up, tends to be trapped in sharp minimizers early on due to redundant ratio scaling. Additionally, a fixed steep decline in the latter phase restricts deep neural networks from effectively navigating early-phase sharp minimizers. 
To address these issues, we propose Time Varying LARS (TVLARS), a novel algorithm that replaces warm-up with a configurable sigmoid-like function for robust training in the initial phase. TVLARS promotes gradient exploration early on, surpassing sharp optimizers and gradually transitioning to LARS for robustness in later phases. Extensive experiments demonstrate that TVLARS consistently outperforms LARS and LAMB in most cases, with up to 2\% improvement in classification scenarios. Notably, in all self-supervised learning cases, TVLARS achieves up to 10\% performance improvement.
\end{abstract}

\begin{IEEEImpStatement}
The rapid growth in deep learning, especially in building foundation models (e.g. Large (Visual) Language Model, etc) has led to increasing demand for efficient and scalable training techniques, particularly with large batch sizes. Large batch training enhances both speed and hardware utilization for building AI models. However, large batch training occurs in unstable performance. Current methods like LARS with warm-up, while effective, often face challenges in maintaining performance, especially in the presence of sharp minimizers. This paper introduces Time Varying LARS (TVLARS), a novel approach that addresses these limitations by enabling more robust training and improved generalization across diverse settings. With demonstrated improvements of up to 2\% in classification tasks and up to 10\% in self-supervised learning scenarios, TVLARS has the potential to significantly enhance the efficiency and effectiveness of large-scale, accurate, and reliable deep learning applications. 
\end{IEEEImpStatement}

\begin{IEEEkeywords}
Deep Learning, Large Batch Training, Learning rate scheduler.
\end{IEEEkeywords}

\section{Introduction}
Large Batch Training (LBT) plays a crucial role in modern Deep Learning (DL), offering efficiency gains through parallel processing, improved generalization with exposure to diverse samples, and optimized memory and hardware utilization \cite{survey, opt-method}. These advantages make LBT particularly suitable for training large Deep Neural Network (DNN) models and Self-Supervised Learning (SSL) tasks\cite{zbontar2021barlow,chen2020big,chen2020simple}, in which finding good representation latent is crucial. Nonetheless, the present application of LBT with conventional gradient-based methods often necessitates the use of heuristic tactics and results in compromised generalization accuracy \cite{2017-DL-TrainingLonger,2018-LR-SharpMinia}.

Numerous methods \cite{2021-DL-CLARS,2020-DL-LAMB,2020-DL-LAMBC} have been explored to address the performance issues associated with LBT. Among these methods, Layer-wise Adaptive Rate Scaling (LARS) \cite{2017-DL-LARS} and its variation, LAMB \cite{2020-DL-LAMB}, has gained significant popularity. Fundamentally, LARS employs adaptive rate scaling to improve gradient descent on a per-layer basis. As a result, training stability is enhanced across the layers of the DNN model. Despite the benefits, LARS faces instability in the initial stages of the LBT process, leading to slow convergence, especially with large batches (LB). Implementing a warm-up strategy is effective in reducing the LARS adaptive rate and stabilizing the learning process for larger batch sizes. Nevertheless, achieving stable convergence with LARS necessitates a specific duration of time. Moreover, as the batch size increases significantly, the performance of LARS experiences a notable decline \cite{2017-DL-LARS,2021-DL-CLARS}.

In light of these challenges, \textcolor{khoi}{we raise two considerable questions:
\textbf{1)} \textit{What are the fundamental factors precipitating instability in the early training stages of LARS and warm-up?} \textbf{2)} \textit{Is there an alternative methodology that is proficient to LBT by tackling the underlying issues?}} 
This paper aims to provide answers to these questions in the following way:
First, we conduct numerous empirical experiments to gain a more comprehensive understanding of the principles and to grasp the limitations inherent in the series of LARS techniques. These experiments yield intriguing observations, particularly during the initial phase of LARS training. \textcolor{duong}{Concerning the LARS rationales}, the layer-wise learning rate $\gamma^k_t$ was determined to be high due to the Layer-wise Normalization Rate (LNR), i.e., $\Vert w \Vert / \Vert \nabla \mathcal{L}\Vert$. This effect was attributed to the near-zero value of the Layer-wise Gradient Norm (LGN). This infinitesimal value of LGN (e.g., $\Vert \nabla \mathcal{L}\Vert \ll 1$) was a consequence of getting trapped into sharp minimizers, characterized by large positive eigenvalue of Hessian during the initial phase \cite{2018-LR-SharpMinia, 2017-ShartMinima-Generalize}. As a result, this phenomenon leads to an \emph{explosion of the scaled gradient}. 

For the incorporation of warm-up into LARS, \emph{it takes considerable unnecessary steps to scale the gradient to a threshold that enables escape from the initial sharp minimizers} (refers to Figure~\ref{fig:lars-tvlars}). Furthermore, because of the fixed decay in the Learning Rate (LR), the warm-up process \emph{does not effectively encourage gradient exploration over the initial sharp minimizers} and struggles to adapt to diverse datasets. 

Given these findings, we answer the second research question by \emph{proposing a new algorithm called Time-Varying LARS} (TVLARS), which enables gradient exploration for LARS in the initial phase while retaining the stability of other LARS family members in the latter phase. Instead of using warm-up (which suffers from major aforementioned issues), TVLARS overcomes sharp minimizes by taking full advantage of a high initial LR (i.e., target LR) and inverted sigmoid function to enhance training stability, aligning with theories about sharp minimizers in LBT \cite{2018-LR-SharpMinia}. Our contributions can be summarized as follows: 
\begin{itemize}
    \item We provide new insights on the behavior of two canonical LBT techniques, namely LARS and LAMB, via comprehensive empirical evaluations, to understand how they enhance the performance of LBT.
    \item We identify crucial causes of the well-known issue of performance drops of warm-up in many cases, and argue that these shortcomings may arise from the lack of understanding of sharp minimizers in LBT.
    \item We propose a simple alternative technique, named TVLARS, which is more aligned with the theories about sharp minimizers in LBT and can avoid the potential issues of the warm-up approach. 
    \item We conduct various experimental evaluations, comparing its performance against other popular baselines. Our experiments' results demonstrate that TVLARS significantly outperforms the state-of-the-art benchmarks, such as WA-LARS under the same delay step and target LR.
\end{itemize}

\section{Related Works}
\textbf{Large-batch training.} In \cite{codreanu2017scale}, several LR schedulers are proposed to figure out problems in LBT, especially the Polynomial Decay technique which helps ResNet50 converge within 28 minutes by decreasing the LR to its original value over several training iterations. Since schedulers are proven to be useful in LBT, \cite{2017-DL-LARS}, \cite{peng2018megdet} suggested an LR scheduler based on the accumulative steps and a GPU cross Batch Normalization. \cite{gotmare2018closer}, besides, investigates deeper into the behavior of cosine annealing and warm-up strategy then shows that the latent knowledge shared by the teacher in knowledge distillation is primarily disbursed in the deeper layers. Inheriting the previous research, \cite{2018-DL-LinearScaling} proposed a hyperparameter-free linear scaling rule used for LR adjustment by constructing a relationship between LR and batch size as a function.

\textbf{Adaptive optimizer.} Another orientation is optimization improvement, starting by \cite{2017-DL-LARS}, which proposed LARS optimizers that adaptively adjust the LR for each layer based on the local region. To improve LARS performance, \cite{jia2018highly} proposed two training strategies including low-precision computation and mixed-precision training. In contrast, \cite{pmlr-v162-liu22n} authors propose JointSpar and JointSpar-LARS to reduce the computation and communication costs. On the other hand, Accelerated SGD \cite{yamazaki2019accelerated}, is proposed for training DNN in large-scale scenarios. In \cite{2020-DL-LAMB}, \cite{slamb}, new optimizers called LAMB and SLAMB were proved to be successful in training Attention Mechanisms along with the convergence analysis of LAMB and LARS. With the same objective AGVM \cite{xue2022largebatch} is proposed to boost RCNN training efficiency. Authors in \cite{2020-DL-LAMBC}, otherwise proposed a variant of LAMB called LAMBC which employs trust ratio clipping to stabilize its magnitude and prevent extreme values. CLARS \cite{2021-DL-CLARS}, otherwise is suggested to exchange the traditional warm-up strategy owing to its unknown theory. 

\section{Backgrounds}
\textbf{Notation.} We denote by $w_t \in \mathbb{R}^d$ the model parameters at time step $t$. For any \textcolor{khoi}{empirical loss function} $\ell: \mathbb{R}^d \rightarrow \mathbb{R}$, $\nabla \ell(x,y\vert w_t)$ denotes the gradient with respect to $w_t$. We use $\Vert\cdot\Vert$ and $\Vert\cdot\Vert_1$ to denote $l_2$-norm and $l_1$-norm of a vector, respectively. We start our discussion by formally stating the problem setup. In this paper, we study a non-convex stochastic optimization problem of the form:
\begin{align}
    \underset{w\in \mathbb{R}^d}{\min} \mathcal{L}(w) \triangleq \mathbb{E}_{(x,y)\sim P(\mathcal{X}, \mathcal{Y})} [\ell(x,y\vert w)] + \frac{\lambda}{2}\Vert w \Vert^2
\end{align}
where $(x, y)$ and $P(\mathcal{X}, \mathcal{Y})$ represent data sample and its distribution, respectively.

\textbf{LARS.} To deal with LBT, the authors in \cite{2017-DL-LARS} proposed LARS. Suppose a neural network has $K$ layers, we have $w=\{w^1, w^2,\ldots ,w^K\}$. The LR at the layer $k$ is updated as follows:
\begin{align}
    \gamma^k_t = \gamma_\textrm{scale}\times\eta\times\frac{\Vert w^k_t \Vert}{\Vert \frac{1}{\mathcal{B}}\sum_{i = 0}^{\mathcal{B} - 1}\nabla \ell(x_i, y_i \vert w^k_t) + w_d\Vert},
\label{eq:LARS}
\end{align}
where $\gamma_\textrm{scale} = \gamma_\textrm{tuning}\times\frac{\mathcal{B}}{\mathcal{B}_\textrm{base}}$ is the base LR \cite{2017-DL-LARS}, $\eta$ is the LARS coefficient for the LBT algorithm, \textcolor{khoi}{$\mathcal{B}$ and $\mathcal{B}_{\rm base}$ denote the batch size used in practice and the base batch size that yields optimal performance (which usually be the small batch size), respectively \cite{2022-LR-Decay1, 2014-DL-WeirdTrick}}, and $w_d$ is weight decay to ensure non-zero division, for simplicity, it is not included in the dominator later on. We denote $\Vert \frac{1}{B}\sum_{i = 0}^{\mathcal{B} - 1}\nabla \ell(x_i, y_i | w^k_t) \|$ as the LGN and $\|w_t^k\|$ as LWN. For simplicity, we denote the LGN as $\Vert\nabla\mathcal{L}(w_t^k) \Vert$. The LNR is defined as $\| w^k_t \|/\| \frac{1}{B}\sum_{i = 0}^{\mathcal{B} - 1}\nabla \ell(x_i, y_i | w^k_t) \|$, which has the objective of normalizing the learning rate at each layer $k$. Despite its practical effectiveness, there is inadequate theoretical insight into LARS. Additionally, without implementing warm-up techniques \cite{2018-DL-LinearScaling}, LARS tends to exhibit slow convergence or divergence during initial training.

\section{Experimental Study}\label{sec:lars-mys}
This section explores how the LARS optimizers contribute to LBT. By revealing the mechanism of LBT, we provide some insights for improving LARS performance. To understand the detrimental impact of lacking warm-up procedure in current state-of-the-art LBT techniques, we employ the LARS and LAMB optimizers on vanilla classification problems to observe the convergence behavior. We conduct empirical experiments on CIFAR10 \cite{2010-DL-Cifar10}. 

\subsection{On the principle of LARS}\label{sec:lars-principle}
First, we revisit the LARS algorithm, which proposes adaptive rate scaling. Essentially, LARS provides a set of learning rates that adapts \textcolor{khoi}{individually} to each layer of the DNN, as shown in Equation~(\ref{eq:LARS}). From a geometric perspective on layer $k$ of the DNN, LWN $\Vert w^k_t \Vert$ can be seen as the magnitude of the vector containing all components in the Euclidean vector space. Similarly, the LGN can be regarded as the magnitude of the gradient vector of all components in the vector space. Thus, the LNR can be interpreted as the number of distinct pulses in Hartley's law \cite{1928-DL-HartleyLaw}.

By considering the LNR, we can adjust the layer-wise gradient based on the LWN. In other words, instead of taking the \textcolor{duong}{vanilla model update step $\nabla \textcolor{khoi}{\mathcal{L}(w^k_t)}$ at every layer $k$, we perform an update step as a percentage of the LWN. The proportional gradient update can be expressed as}
\begin{equation}
\Vert w^k_t \Vert\times\frac{\nabla\mathcal{L}(w^{k,j}_t)}{\Vert \nabla \mathcal{L}(w^k_t)\Vert},
\end{equation}
$\nabla\mathcal{L}(w^{k,j}_t)/\Vert \nabla \mathcal{L}(w^k_t) \Vert$ represents the estimation of the percentage of gradient magnitude on each parameter $j$ with respect to the LGN of layer $k$. It becomes apparent that the layer-wise learning rate of LARS only influences the percentage update to the layer-wise model parameters. However, it does not address the issue of mitigating the problem of sharp minimizers in the initial phase.
\begin{figure}
\centering
\begin{subfigure}{.47\columnwidth}
  \centering
  \includegraphics[width=\linewidth]{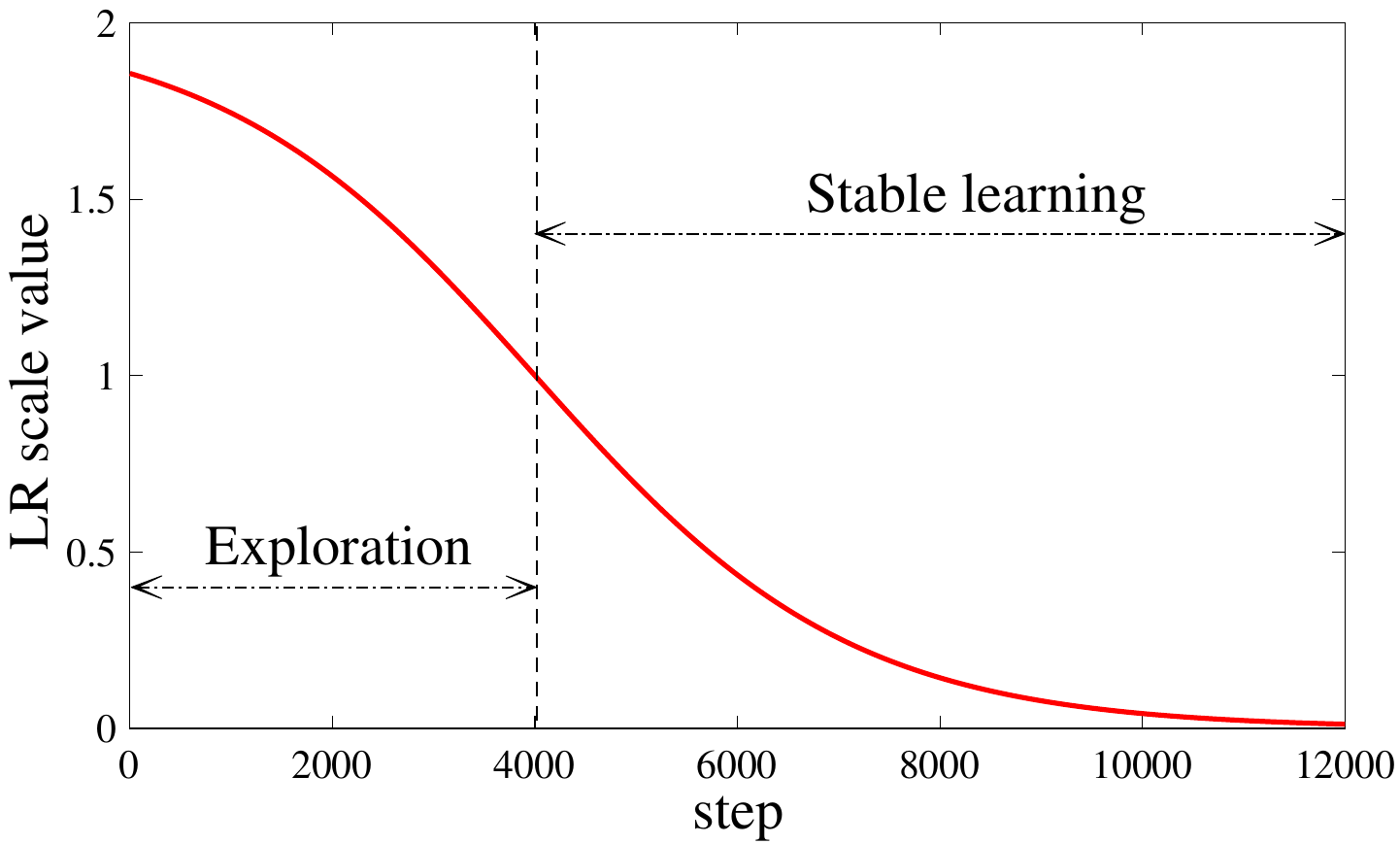}
  \caption{Non-WA-LARS.}
  \label{fig:lars-tvlars-nwa}
\end{subfigure}%
\begin{subfigure}{.52\columnwidth}
  \centering
  \includegraphics[width=\linewidth]{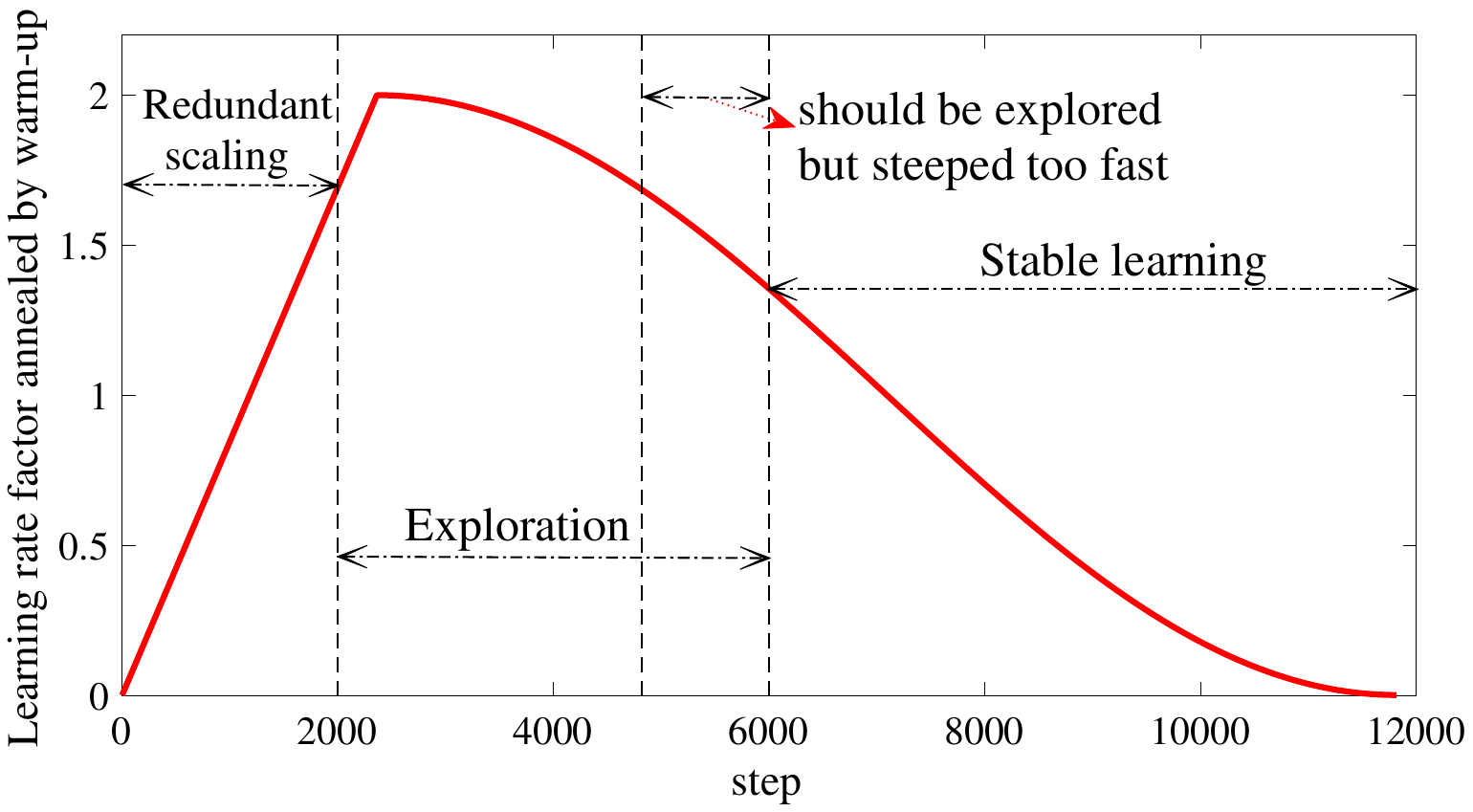}
  \caption{WA-LARS.}
  \label{fig:lars-tvlars-wa}
\end{subfigure}
\caption{Scaling the learning rate in two different strategies.}
\label{fig:lars-tvlars}
\end{figure}

\subsection{LARS and the importance of \textcolor{khoi}{the} warm-up}\label{sec:lars-discussion}
The warm-up strategy is considered an important factor of LBT, which enhances the model performance and training stability \cite{2020-DL-LAMB}, \cite{gotmare2018closer}, \cite{2018-DL-LinearScaling}, \cite{2017-DL-LARS}. The warm-up strategy in Eq.~(\ref{eq:wa-tech}) linearly scales the learning rate from $0$ to the target one ($t \leq d_{\rm wa}$), where $d_{\rm wa}$ is the number of warm-up steps. When $t > d_{\rm wa}$, $\gamma_t^k$ started to be annealed by a cosine function (see Figure \ref{fig:lars-tvlars-wa}). However, warm-up theoretically increases the learning rate from a very small value causing \emph{redundant training time} and making model \emph{easy to be trapped} into sharp minimizers. Therefore, we analyze the vitality of warming up as well as its potential issues. 
\begin{equation}\label{eq:wa-tech}
    \gamma_t^k = \begin{cases}
        \gamma_{\rm scale} \times \frac{t}{d_{\rm wa}},~\textrm{when } t \leq d_{\rm wa} \\
        \frac{1}{2}\left[1 + \cos(\frac{t - d_{\rm wa}}{T - d_{\rm wa}})\right],~\textrm{when } t > d_{\rm wa}
    \end{cases}
\end{equation}
\textbf{Quantitative results.} Our quantitative results on CIFAR10 reveal a decline in accuracy performance when contrastingly runs with and without a warm-up strategy. In particular, the LARS without a warm-up technique exhibits greater training instability, characterized by fluctuating accuracy. Moreover, the performance decline becomes more significant, especially with larger batch sizes (refers to Figure \ref{fig:lars-test-16k}).

\begin{figure}
    \centering
    \includegraphics[width=\linewidth]{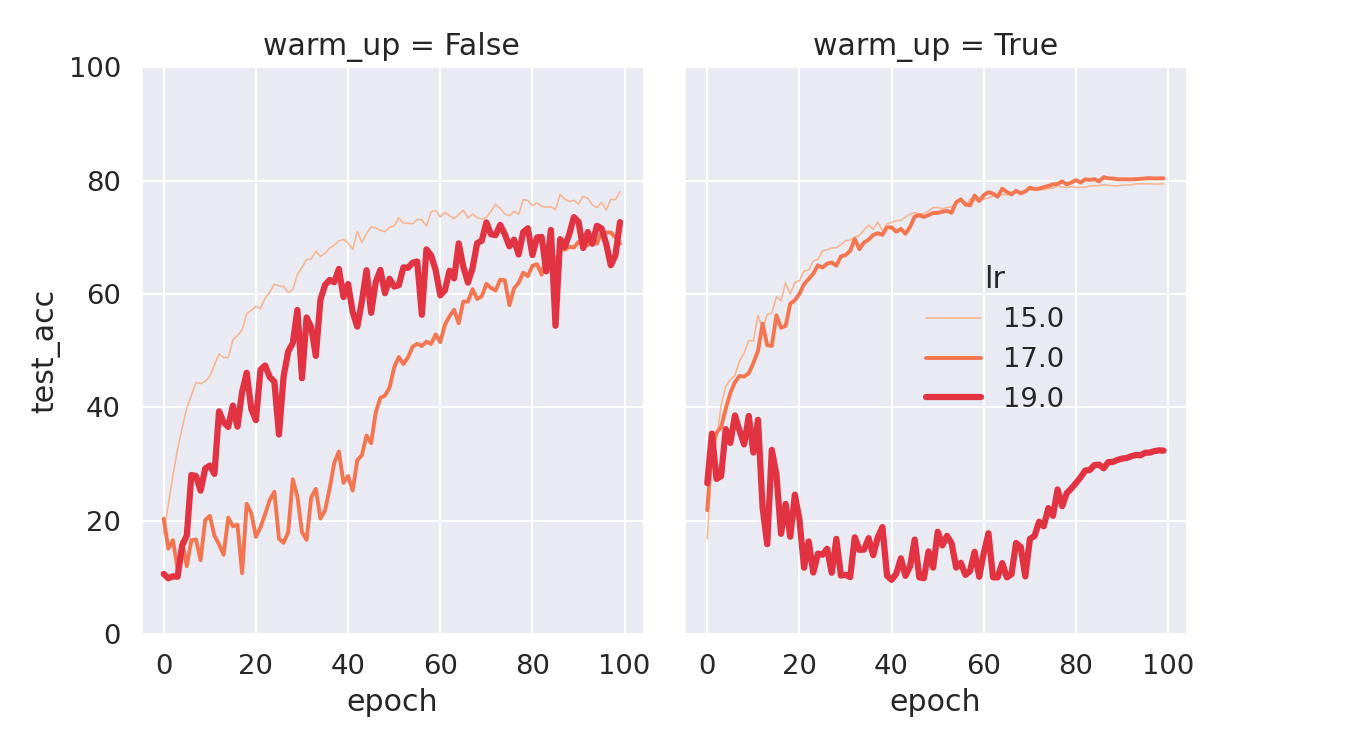}
    \caption{Comparison between LARS trained with and without warm-up.}
    \label{fig:lars-test-16k}
\end{figure}

\textcolor{black}{\textbf{From adaptive ratio to LBT performance.} To enhance our comprehension of the adaptive rate scaling series, we conduct thorough experiments analyzing LNR in LARS. Each result in our study includes two crucial elements: the test loss during model training and the corresponding LNR.}
\begin{figure*}[!ht]
     \centering
     \begin{subfigure}[b]{0.24\textwidth}
         \centering
         \includegraphics[width=\textwidth]{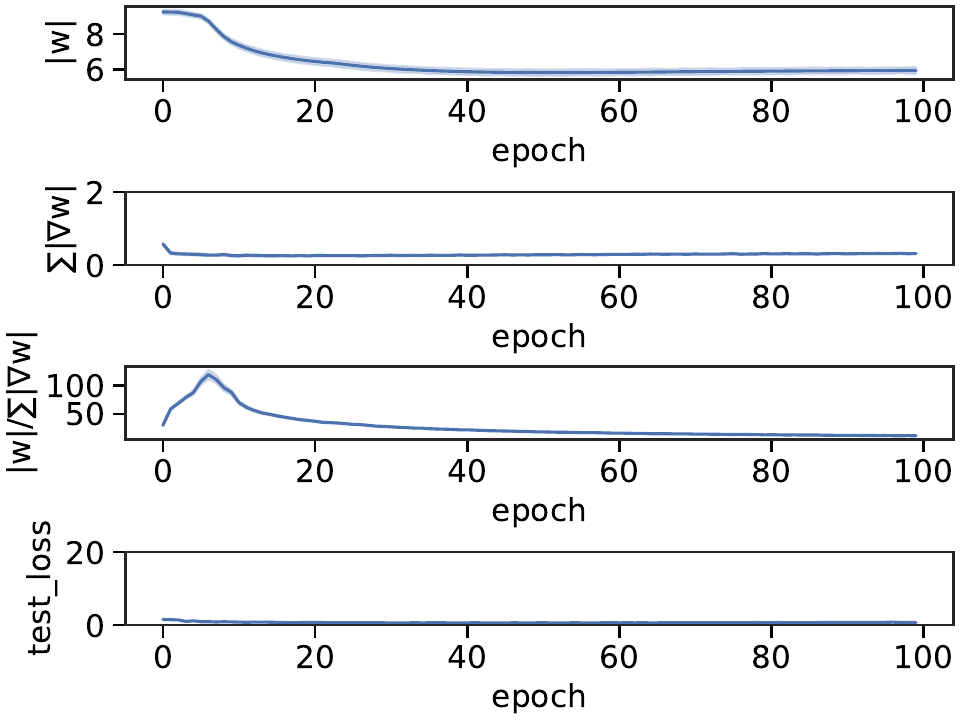}
         \caption{NOWA-LARS $1024$}
         \label{fig:bigwhy-16K-lr15-wowa}
     \end{subfigure}
     \begin{subfigure}[b]{0.24\textwidth}
         \centering
         \includegraphics[width=\textwidth]{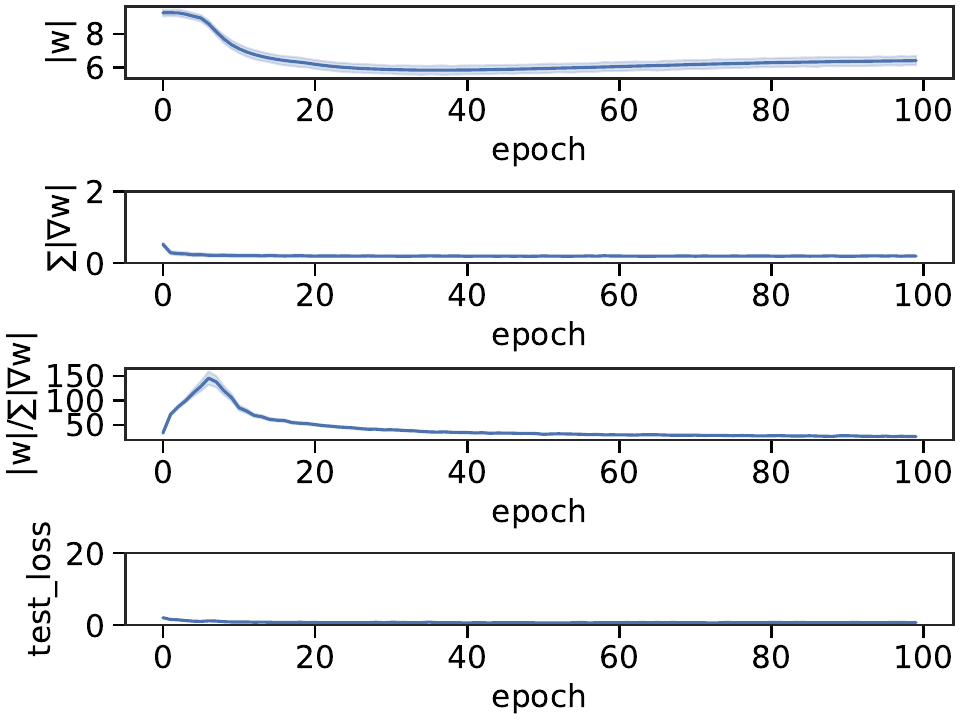}
         \caption{NOWA-LARS $2048$}
         \label{fig:bigwhy-16K-lr17-wowa}
     \end{subfigure}
     \begin{subfigure}[b]{0.24\textwidth}
         \centering
         \includegraphics[width=\textwidth]{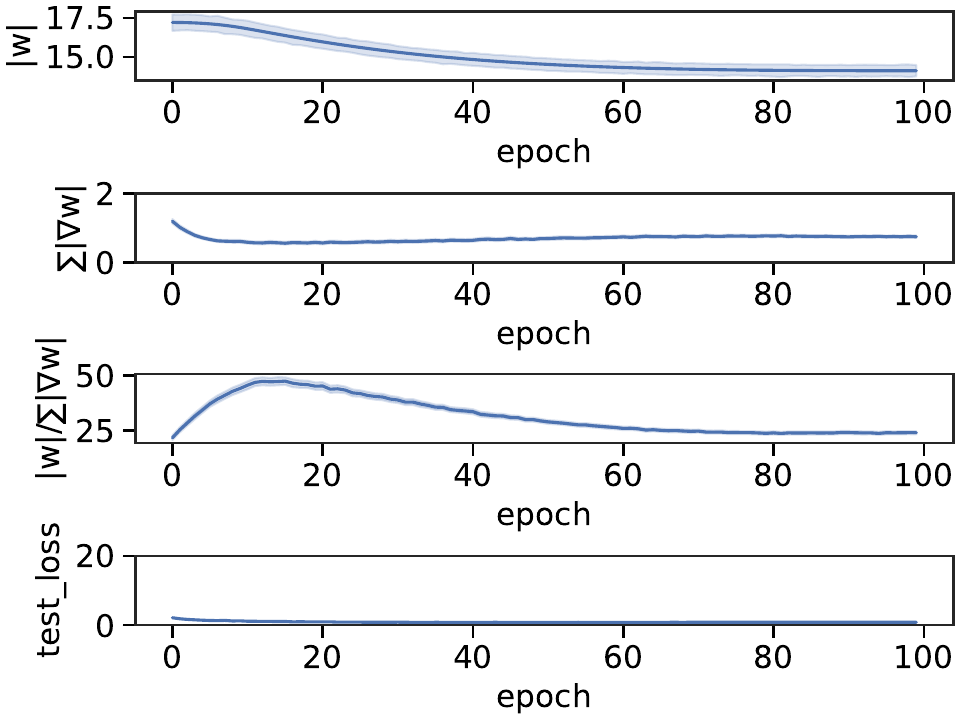}
         \caption{WA-LARS $1024$}
         \label{fig:bigwhy-8K-lr12-wa}
     \end{subfigure}
     \begin{subfigure}[b]{0.24\textwidth}
         \centering
         \includegraphics[width=\textwidth]{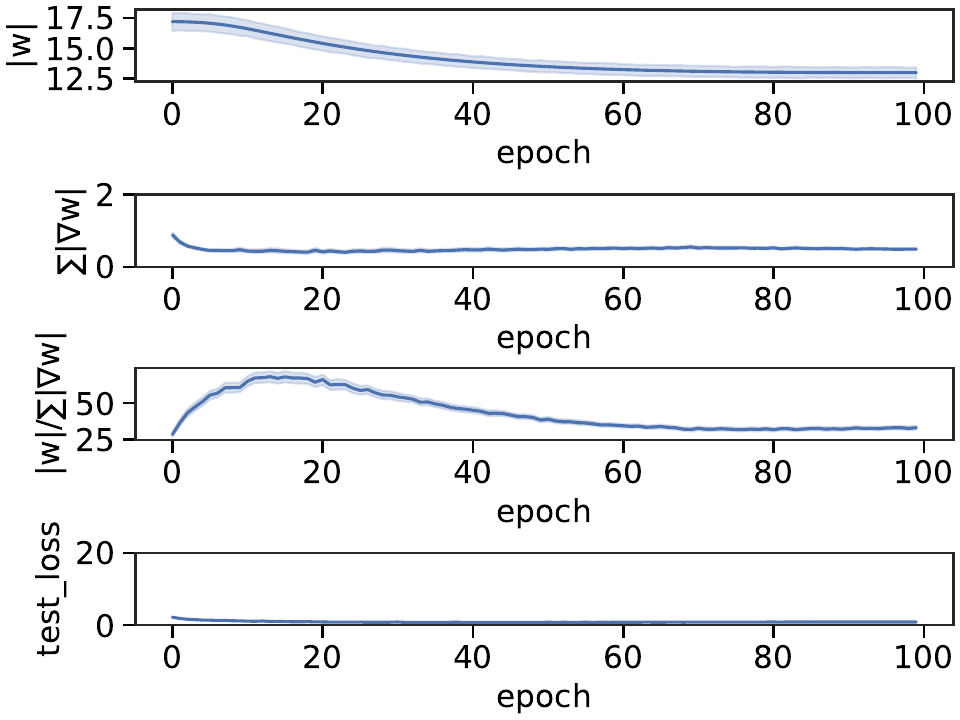}
         \caption{WA-LARS $2048$}
         \label{fig:bigwhy-8K-lr15-wa}
     \end{subfigure}
     \begin{subfigure}[b]{0.24\textwidth}
         \centering
         \includegraphics[width=\textwidth]{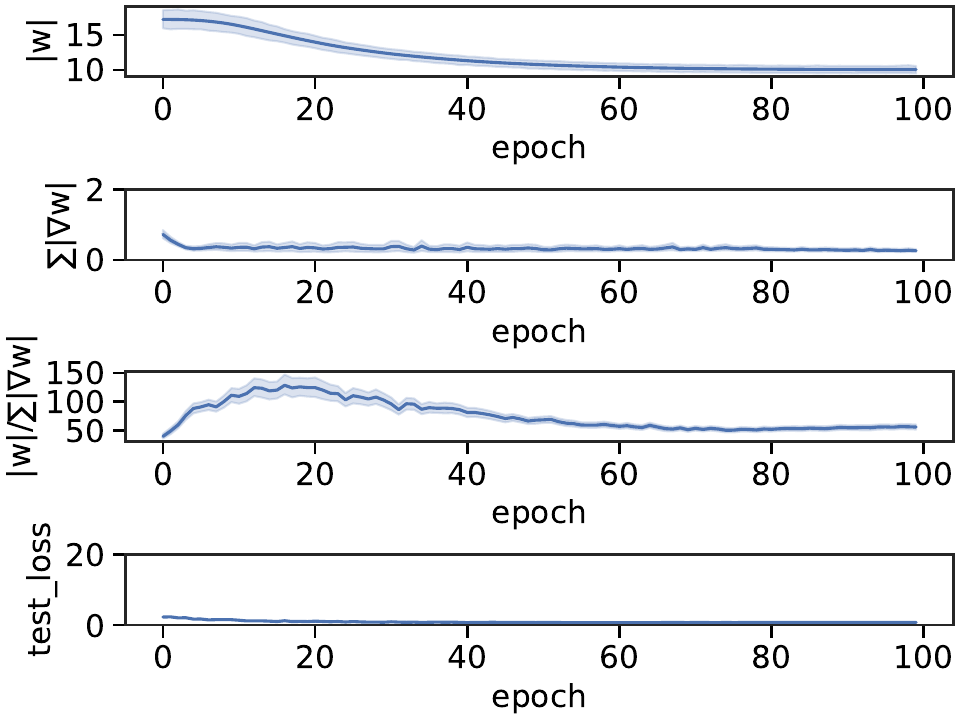}
         \caption{NOWA-LARS $8192$}
         \label{fig:bigwhy-16K-lr19-wa}
     \end{subfigure}
     \begin{subfigure}[b]{0.24\textwidth}
         \centering
         \includegraphics[width=\textwidth]{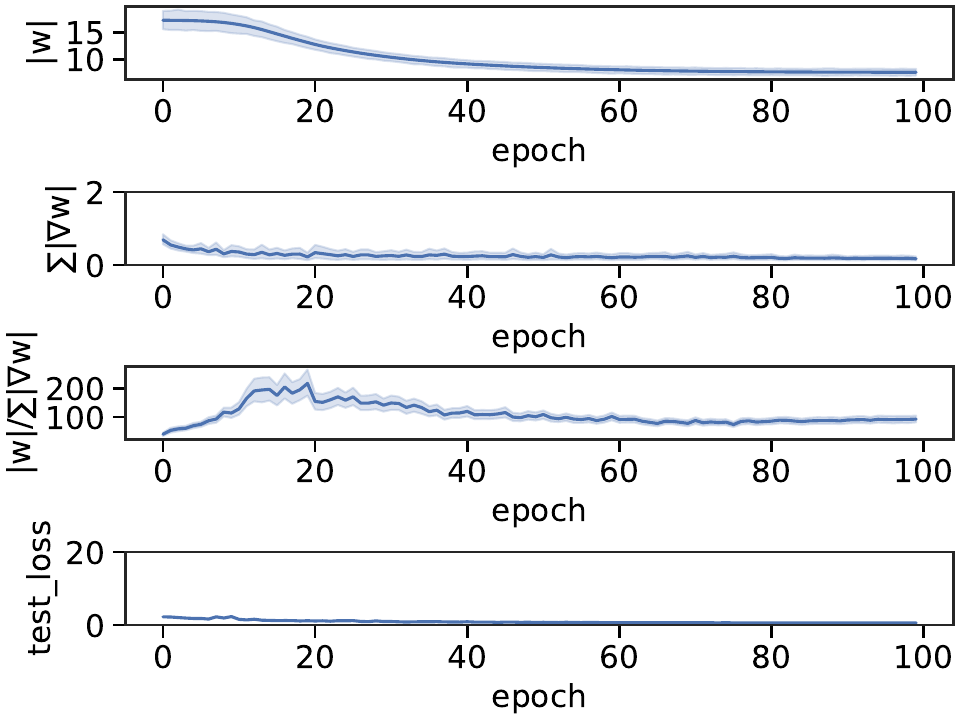}
         \caption{NOWA-LARS $16384$}
         \label{fig:bigwhy-16K-lr19-wowa}
     \end{subfigure}
     \begin{subfigure}[b]{0.24\textwidth}
         \centering
         \includegraphics[width=\textwidth]{image-lib/adv_loss_ratio_plot_result/cifar10_resnet18_xavier_normal/lars/adv_ratio_8192_0dot1_lars-warm.pdf}
         \caption{WA-LARS $8192$}
         \label{fig:bigwhy-16K-lr15-wa}
     \end{subfigure}
     \begin{subfigure}[b]{0.24\textwidth}
         \centering
         \includegraphics[width=\textwidth]{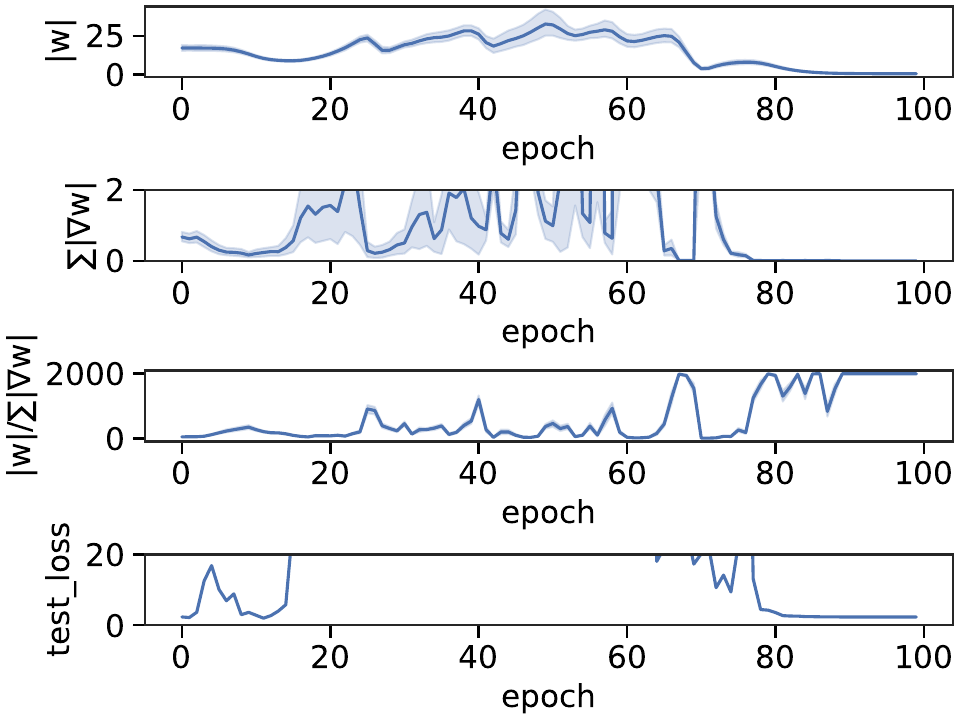}
         \caption{WA-LARS $16384$}
         \label{fig:bigwhy-16K-lr17-wa}
     \end{subfigure}
     \caption{This figure illustrated the quantitative performance of LARS ($B = 16\rm{K})$ conducted with a warm-up and without a warm-up strategy (NOWA-LARS). Each figure contains 4 subfigures, which indicate the LWN $\Vert w \Vert$, LGN $\Vert\nabla \mathcal{L}(w)\Vert$, and LNR $\Vert w \Vert/ \Vert\nabla \mathcal{L}(w)\Vert$ of all layers, and test loss value in the y axis.}
     \label{fig:bigwhy1}
\end{figure*}

In our study, we examined the detailed results of WA-LARS (Figure \ref{fig:bigwhy1}). Based on these findings, we make observations regarding the convergence and the behavior of the LNR:

\textbf{1)} During the initial phase of the successfully trained models (characterized by a significant reduction in test loss), the LNR tends to be high, indicating a higher learning rate. 

\textbf{2)} High variance in the LNR indicates significant exploration during training, resulting in noticeable fluctuations in LWM. Conversely, when training requires stability, the LNR variance decreases.

\textbf{3)} It is necessary to impose an upper threshold on the LNR to prevent divergence caused by values exceeding the range of $\Vert \gamma_t^k \times \nabla\mathcal{L} \Vert$ over the LWN $\Vert w_t \Vert$ \cite{2017-DL-LARS}. The absence of the warm-up technique in LARS often leads to the LNR surpassing this upper threshold. This issue is addressed by the WA-LARS. Specifically, compared with non warm-up LARS (NOWA-LARS) at batch sizes, where the corresponding LNR during the initial phase is limited to values such as 0.15, 0.2 (refer to Figures~\ref{fig:bigwhy-16K-lr15-wowa},\ref{fig:bigwhy-16K-lr17-wowa}).

\textbf{4)} The LNR in the WA-LARS is regulated by a more gradual incline shown in Figure \ref{fig:bigwhy1}.  On the other hand, in contrast to the WA-LARS, the NOWA-LARS exhibits a steep decline in the LNR. 
\textbf{Extensive Study.} To gain deeper insights, we conduct additional experiments, and the results are presented in Figure~\ref{fig:bigwhy1}. Through the analysis of Section~\ref{sec:lars-discussion} and the insights derived from Figure~\ref{fig:bigwhy1}, the reduction in the LNR can be attributed to the rapid decrease in the LWN over time. This phenomenon occurs in tandem with the exponential reduction of the LGN. Consequently, we can deduce that the superior performance of the learning model about larger batch sizes is a consequence of the gradual decrease in the LWN.

From Figure~\ref{fig:bigwhy1}, we can see that there is always a gradual and consistent decrease of $\Vert w^k_t \Vert$ whenever the models have stable convergence. \textcolor{khoi}{Hence, we can conceptualize the model parameters as residing on a hypersphere, with the use of gradient descent technique to explore the topological space of this hypersphere} (refer to Figure~\ref{fig:lars-hypersphere}). This exploration commences from the hypersphere's edge, indicated by $\Vert w^k_t \Vert = w_{\textrm{max}}$, and progresses toward its center, characterized by $\Vert w^k_t \Vert = 0$. Nevertheless, in cases of exploding gradient issues, significant fluctuations in $\Vert w^k_t \Vert$ can disrupt the functionality of this hypothesis.

In the context of NOWA-LARS, the LNR experiences a steeper decline due to the rapid reduction in the LWN $\Vert w^k_t \Vert$. \textcolor{khoi}{The notable reduction in rate can be interpreted as a rapid exploration of the parameter vector hypersphere. This swift exploration may overlook numerous potential searches, risking a failure to identify global minimizers. In contrast, with WA-LARS, the search across the parameter space is more gradual, ensuring a more stable and comprehensive exploration of the parameter hypersphere (Figure \ref{fig:bigwhy1})}

Based on the aforementioned observations, we conclude that the primary challenges encountered in the context of LARS stem from two key issues: \textbf{1)} a \emph{high LNR} and \textbf{2)} \emph{substantial variance in the LWN}. These dual challenges pose significant obstacles to LARS's effective performance. As a solution, the warm-up process aims to prevent the occurrence of exploding gradients during the initial phase by initially setting the learning rate coefficient to a significantly low value and gradually increasing it thereafter. Nevertheless, as the batch size reaches exceedingly large values, the performance of WA-LARS appears to deteriorate, as the behavior of LNR and LWN deviates from the previously mentioned observations. We believe that the application of the warm-up technique may be somewhat lacking in a comprehensive understanding. Consequently, we are motivated to delve deeper into the characteristics of sharp minimizers within the LBT, seeking a more profound insight into LARS and the warm-up process.

\subsection{Shortcomings of the warm-up} \label{sec:understand-empirical-results}
\textbf{The degradation in learning performance when the batch size becomes large.} As mentioned in Section~\ref{sec:lars-principle}, the LARS technique only influences the percentage update to their layer-wise model parameters to stabilize the gradient update behavior. However, the learning efficiency is not affected by the LARS technique. To gain a better understanding of LARS performance as the batch size increases significantly, our primary goal is to establish an upper limit for the unbiased gradient, which is similar to \cite{2020-FL-FedNova} (i.e., the variance of the batch gradient). We first adopt the following definition: 
\begin{definition}
    A gradient descent $g^t_i$ at time $t$ using reference data point $x_i$ is a composition of a general gradient $\Bar{g}^t$ and a variance gradient $\Delta g^t_i$. For instance, we have $ g^t_i = \Bar{g}^t + \Delta g^t_i,$ where the variance gradient $\Delta g^t_i$ represents the perturbation of gradient descent over the dataset. The general gradient represents the invariant characteristics over all perturbations of the dataset. 
\label{def:single-gradient}
\end{definition}
This definition leads to the following theorem that shows the relationship between unbiased gradient \cite{2020-FL-FedNova} and the batch size. 
\begin{theorem}[Unbiased Large Batch Gradient]
    Given $\Bar{g}^t$ as mentioned in Definition~\ref{def:single-gradient}, $g^t_\mathcal{B}$ is the batch gradient with batch size $\mathcal{B}$. Given $\sigma^2$ is the variance for point-wise unbiased gradient as mentioned in \cite{2020-FL-FedNova}, we have the stochastic gradient with $\mathcal{B}$ is an unbiased estimator of the general gradient and has bounded variance: $\mathbb{E}_{(x, y) \sim P(\mathcal{X}, \mathcal{Y})} \left[ \Bar{g}^t - g^t_\mathcal{B} \right] \leq {\sigma^2}/{\mathcal{B}}$, 
\label{theorem:batch-unbiased-grad}
\end{theorem}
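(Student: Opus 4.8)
The plan is to derive both assertions directly from Definition~\ref{def:single-gradient} together with the standard assumption that the $\mathcal{B}$ samples forming the mini-batch are drawn i.i.d.\ from $P(\mathcal{X},\mathcal{Y})$. First I would write the batch gradient as the empirical average $g^t_\mathcal{B} = \frac{1}{\mathcal{B}}\sum_{i=0}^{\mathcal{B}-1} g^t_i$ and substitute the decomposition $g^t_i = \bar{g}^t + \Delta g^t_i$ to obtain $g^t_\mathcal{B} = \bar{g}^t + \frac{1}{\mathcal{B}}\sum_{i=0}^{\mathcal{B}-1}\Delta g^t_i$. Since $\bar{g}^t$ is defined as the invariant, perturbation-free component of the gradient, the natural reading is $\bar{g}^t = \mathbb{E}_{(x,y)\sim P}[g^t_i]$, which forces $\mathbb{E}[\Delta g^t_i] = 0$ for every $i$. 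Taking expectations and using linearity then gives $\mathbb{E}[g^t_\mathcal{B}] = \bar{g}^t$, establishing that $g^t_\mathcal{B}$ is an unbiased estimator of the general gradient.

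For the variance bound I would expand the squared error $\mathbb{E}\big[\Vert\bar{g}^t - g^t_\mathcal{B}\Vert^2\big] = \mathbb{E}\big[\Vert\tfrac{1}{\mathcal{B}}\sum_{i}\Delta g^t_i\Vert^2\big]$ into its diagonal and off-diagonal contributions. The crucial simplification is that independence of the samples together with $\mathbb{E}[\Delta g^t_i]=0$ kills every cross term, since $\mathbb{E}[\langle\Delta g^t_i,\Delta g^t_j\rangle]=\langle\mathbb{E}[\Delta g^t_i],\mathbb{E}[\Delta g^t_j]\rangle=0$ for $i\neq j$. Only the $\mathcal{B}$ diagonal terms survive, and invoking the point-wise variance hypothesis $\mathbb{E}[\Vert\Delta g^t_i\Vert^2]\leq\sigma^2$ from \cite{2020-FL-FedNova} leaves $\frac{1}{\mathcal{B}^2}\cdot\mathcal{B}\cdot\sigma^2=\sigma^2/\mathcal{B}$, matching the claimed bound (with equality when the point-wise variance equals $\sigma^2$).

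The main obstacle I anticipate is interpretive rather than computational: the probabilistic content is hidden inside Definition~\ref{def:single-gradient}, so the proof hinges on justifying that $\bar{g}^t$ genuinely coincides with the population expectation (making $\Delta g^t_i$ mean-zero) and that the batch elements are mutually independent (making the cross terms vanish). I would therefore state these two premises explicitly before the calculation. As a sanity check I would note that if the batch were instead sampled without replacement from a finite dataset of size $N$, a finite-population correction would modify the bound to $\frac{\sigma^2}{\mathcal{B}}\cdot\frac{N-\mathcal{B}}{N-1}$; keeping the i.i.d.\ assumption is exactly what yields the clean $\sigma^2/\mathcal{B}$ stated in the theorem.
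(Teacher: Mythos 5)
Your argument is correct and shares the paper's starting point: using Definition~\ref{def:single-gradient} to write $g^t_\mathcal{B} = \bar{g}^t + \frac{1}{\mathcal{B}}\sum_{i}\Delta g^t_i$. Where you diverge is in how the bound $\sigma^2/\mathcal{B}$ is then obtained. The paper invokes the $L^2$ weak law of large numbers (Theorem 2.2.3 of \cite{2010-MF-Probability}) as a black box and records the conclusion loosely --- it even writes the vector inequality $g^t_\mathcal{B} \leq \bar{g}^t + \sigma^2/\mathcal{B}$, which mixes a random vector with a scalar bound, and states the result as $\mathbb{E}\left[\bar{g}^t - g^t_\mathcal{B}\right] \leq \sigma^2/\mathcal{B}$, which read literally is just the (zero) bias rather than a variance. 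You instead prove the needed fact from first principles: after making explicit that $\bar{g}^t$ is the population mean (so $\mathbb{E}[\Delta g^t_i] = 0$) and that the batch samples are i.i.d., you expand $\mathbb{E}\bigl[\Vert \frac{1}{\mathcal{B}}\sum_{i}\Delta g^t_i \Vert^2\bigr]$, observe the cross terms vanish, and bound the $\mathcal{B}$ diagonal terms by $\sigma^2$. This computation is exactly what underlies the cited weak law, so conceptually the two proofs coincide; what your version buys is rigor and self-containment: it surfaces the two hypotheses (mean-zero perturbations, independence) that the paper's definition leaves implicit, and it implicitly repairs the theorem statement by interpreting the left-hand side as the second moment $\mathbb{E}\bigl[\Vert \bar{g}^t - g^t_\mathcal{B}\Vert^2\bigr]$, the only reading under which the claim is both a variance bound and nontrivial. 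Your closing remark on sampling without replacement (the $\frac{N-\mathcal{B}}{N-1}$ finite-population correction) is a sensible caveat that neither the paper's statement nor its proof addresses.
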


\begin{proof}
    Revisit the Definition~\ref{def:single-gradient}, we have: 
    \begin{align}
        g^t_i = \Bar{g}^t + \Delta g^t_i.
    \end{align}
    In applying the LB gradient descent with batch size $\mathcal{B}$, we have: 
    \begin{align}
        g^t_\mathcal{B} = \frac{1}{\mathcal{B}} \sum^{\mathcal{B}}_{i=1} g^t_i = \frac{1}{\mathcal{B}} \sum^{\mathcal{B}}_{i=1} \Bar{g}^t + \Delta g^t_i = \Bar{g}^t +\frac{1}{\mathcal{B}} \sum^{\mathcal{B}}_{i=1} \Delta g^t_i.
    \label{eq:lbl-gd}
    \end{align}
    Apply the $L^2$ Weak Law Theorem 2.2.3 in \cite{2010-MF-Probability}, we have: $g^t_\mathcal{B} \leq \Bar{g}^t + \frac{\sigma^2}{\mathcal{B}}$, which can be also understood as: 
    \begin{align}
        \mathbb{E}_{x_i, y_i \sim P(\mathcal{X}, \mathcal{Y})} \left[ \Bar{g}^t - g^t_\mathcal{B} \right] \leq {\sigma^2}/{\mathcal{B}}
    \end{align}
\end{proof}

Theorem~\ref{theorem:batch-unbiased-grad} demonstrates that utilizing an LB size $\mathcal{B}$ during training results in more stable gradients. However, there are two significant concerns with this which come with negative implications. Firstly, the stability of the gradient is influenced by the LB. Consequently, in scenarios where the LNR experiences rapid reduction (discussed in NOWA-LARS in Section~\ref{sec:lars-discussion}), the \emph{gradient descent process is potentially trapped into sharp minimizers during the initial stages} \cite{2018-LR-SharpMinia}. Secondly, due to the steep decline in the LNR, the exploration across the hypersphere of $w_t^k$ occurs excessively swiftly (specifically, from $w_\textrm{max}$ to $0$). LB techniques lack the exploratory characteristics that are available in small batch (SB) methods and often focus excessively on narrowing down to the sharp minimizer that is closest to the starting point \cite{2018-LR-SharpMinia}.
\begin{figure}
\centering
\includegraphics[width=\linewidth]{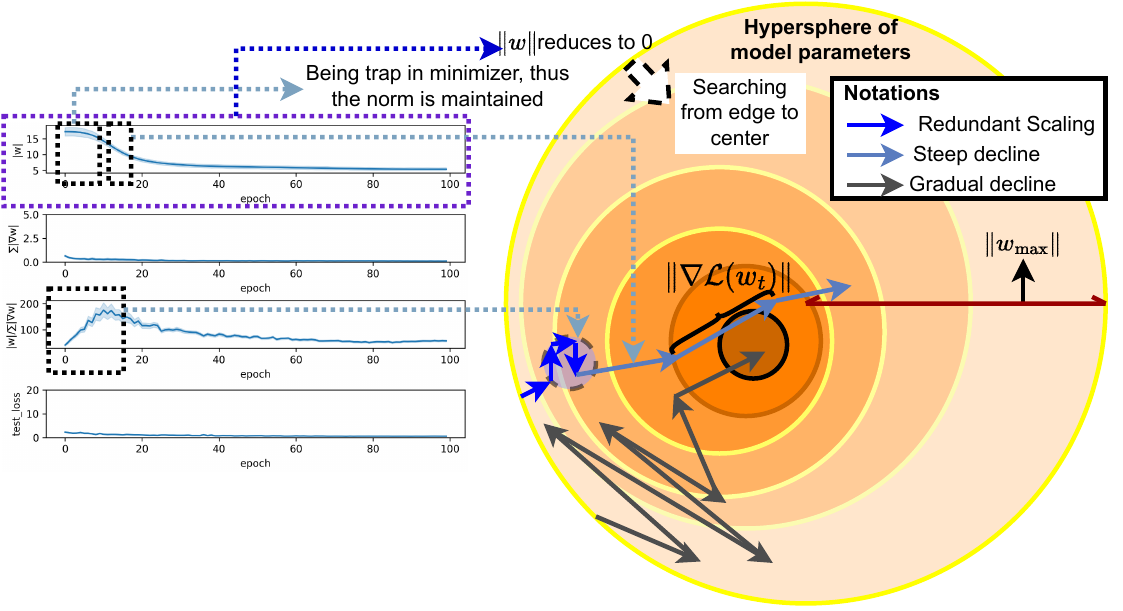}
\caption{Illustration of gradient descent behavior from the perspective of model parameter hypersphere.}
\label{fig:lars-hypersphere}
\end{figure}
\textbf{Redundant Ratio Scaling in Warm-up LARS.} Warm-up \cite{gotmare2018closer} involves initially scaling the base learning rate and subsequently reducing it to facilitate gradient exploration. However, our findings indicate that gradually increasing the base learning rate from an extremely low value before gradient exploration is unnecessary (Figure~\ref{fig:lars-tvlars}). Consequently, when we multiply the base learning rate with the LNR (which tends to be low in initial rounds), $\gamma^k_t$ will be extremely low accordingly. 

When $\gamma^k_t$ is too small, particularly at the initial stage, \emph{learning fails to avoid memorizing noisy data} \cite{2019-DL-HowLRDecay}. Moreover, when the model gets trapped in the sharp minimizers during the warm-up process, due to the steepness of the sharp minimizers, the model will be \emph{unable to escape from the sharp minima}. Furthermore, apart from the high variance of the gradient of mini-batch training, the gradient of the LBT is stable as mentioned in Theorem~\ref{theorem:batch-unbiased-grad}. Therefore, the LBT is halted until the learning rate surpasses a certain threshold (see Figure~\ref{fig:lars-hypersphere} for an illustration).

\begin{table*}[!ht]
\centering
\caption{Accuracy (\%) of LARS, LAMB, and TVLARS. Weight initialization is Xavier Uniform}
\label{tab:perform_cls}
\setlength\aboverulesep{0.5ex}
\resizebox{0.9\linewidth}{!}{%
\begin{tabular}{lccccccccccccc}
\toprule
\multicolumn{2}{l}{\textbf{Problem}}  & \multicolumn{6}{c}{\textbf{Classification} ($\lambda = 10^{-4}$)}                                   & \multicolumn{6}{c}{\textbf{SSL - Barlow Twins} ($\lambda = 10^{-5}$)}                                                                                           \\ \midrule
\multicolumn{2}{l}{\textbf{Data set}} & \multicolumn{3}{c}{\textbf{CIFAR10}}             & \multicolumn{3}{c}{\textbf{ImageNet}}       & \multicolumn{3}{c}{\textbf{CIFAR10}}                                                     & \multicolumn{3}{c}{\textbf{ImageNet}}                             \\ \midrule
\multicolumn{2}{l}{Learning rate}     & 1              & 2              & 3              & 1              & 2              & 3              & 1              & 2                                  & 3                                  & 1              & 2                         & 3                         \\ \midrule
LARS   & \multirow{3}{*}{$B = 512$}   & 74.64          & 77.49          & 79.64          & 33.72          & 36.60          & 38.92          & 51.31          & \multicolumn{1}{l}{58.89}          & \multicolumn{1}{l}{60.76}          & 18.32          & 19.52                     & 21.12                     \\
LAMB   &                              & 57.88          & 64.76          & 78.39          & 20.28          & 41.40          & 37.52          & 10.01          & \multicolumn{1}{l}{12.01}          & \multicolumn{1}{l}{67.31}          & 13.46          & \multicolumn{1}{l}{17.75} & \multicolumn{1}{l}{27.37} \\
TVLARS &                              & \textbf{78.92} & \textbf{81.42} & \textbf{81.56} & \textbf{37.56} & \textbf{39.28} & \textbf{39.64} & \textbf{69.96} & \multicolumn{1}{l}{\textbf{69.72}} & \multicolumn{1}{l}{\textbf{70.54}} & \textbf{29.02} & \textbf{31.01}            & \textbf{31.44}            \\ \midrule
\multicolumn{2}{l}{Learning rate}     & 2              & 3              & 4              & 2              & 3              & 4              & 2              & 3                                  & 4                                  & 2              & 3                         & 4                         \\ \midrule
LARS   & \multirow{3}{*}{$B = 1024$}  & 74.70          & 78.83          & 80.52          & 33.8           & 36.84          & 38.52          & 61.13          & \multicolumn{1}{l}{67.03}          & 68.98                              & 19.96          & \multicolumn{1}{l}{19.36} & \multicolumn{1}{l}{20.38} \\
LAMB   &                              & 52.06          & 57.83          & 79.98          & 16.88          & 31.84          & 37.76          & 12.03          & \multicolumn{1}{l}{15.03}          & \multicolumn{1}{l}{69.49}          & 16.44          & \multicolumn{1}{l}{16.70} & \multicolumn{1}{l}{24.53} \\
TVLARS &                              & \textbf{81.84} & \textbf{82.58} & \textbf{82.54} & \textbf{39.60} & \textbf{39.16} & \textbf{39.40} & \textbf{67.38} & \multicolumn{1}{l}{\textbf{69.80}} & \multicolumn{1}{l}{\textbf{71.13}} & \textbf{28.98} & \textbf{27.46}            & \textbf{28.32}            \\ \midrule
\multicolumn{2}{l}{Learning rate}     & 5              & 6              & 7              & 5              & 6              & 7              & 5              & 6                                  & 7                                  & 5              & 6                         & 7                         \\ \midrule
LARS   & \multirow{3}{*}{$B = 2048$}  & 75.22          & 79.49          & 81.1           & 34.48          & 38.04          & 40.44          & 52.42          & 57.35                              & \multicolumn{1}{l}{52.42}          & 19.92          & 20.33                     & 19.98                     \\
LAMB   &                              & 43.12          & 47.88          & 81.43          & 12.60          & 18.56          & 39.92          & 10.01          & \multicolumn{1}{l}{15.64}          & \multicolumn{1}{l}{60.08}          & 18.67          & 21.98                     & 23.43                     \\
TVLARS &                              & \textbf{81.52} & \textbf{82.22} & \textbf{82.44} & \textbf{39.56} & \textbf{39.56} & \textbf{41.68} & \textbf{62.61} & \textbf{61.71}                     & \textbf{61.05}                     & \textbf{26.15} & \textbf{23.43}            & \textbf{27.08}            \\ \midrule
\multicolumn{2}{l}{Learning rate}     & 8              & 9              & 10             & 8              & 9              & 10             & 8             & 9                                 & 10                                 & 8             & 9                        & 10                        \\ \midrule
LARS   & \multirow{3}{*}{$B = 4096$}  & 75.63          & 80.96          & 82.49          & 34.24          & 38.56          & 40.40          & 52.77          & \multicolumn{1}{l}{55.74}          & \multicolumn{1}{l}{55.98}          & 19.45          & 20.20                     & 20.25                     \\
LAMB   &                              & 22.32          & 37.52          & 81.53          & 05.24          & 10.76          & 39.00          & 48.93          & 50.19                              & 53.38                              & 13.78          & 15.66                     & 21.45                     \\
TVLARS &                              & \textbf{80.9}  & \textbf{80.96} & \textbf{81.16} & \textbf{38.28} & \textbf{39.96} & \textbf{41.64} & \textbf{57.96} & \textbf{58.28}                     & \textbf{60.46}                     & \textbf{24.29} & \textbf{24.29}            & \textbf{24.92}            \\ \midrule
\multicolumn{2}{l}{Learning rate}     & 10             & 12             & 15             & 10             & 12             & 15             & 10              & 12                                  & 15                                 & 10              & 12                         & 15                        \\ \midrule
LARS   & \multirow{3}{*}{$B = 8192$}  & 77.59          & 81.75          & 82.5           & 34.36          & 38.12          & 42.00          & 50.29          & \multicolumn{1}{l}{52.78}          & \multicolumn{1}{l}{09.65}          & 19.69          & 20.79                     & 21.62                     \\
LAMB   &                              & 16.14          & 19.85          & 81.78          & 01.12          & 02.48          & 39.20          & 42.22          & 45.26                              & 52.39                              & 19.60          & 23.44                     & 23.55                     \\
TVLARS &                              & \textbf{81.16} & \textbf{82.42} & \textbf{82.74} & \textbf{36.48} & \textbf{39.20} & 42.32          & \textbf{54.14} & \textbf{53.56}                     & \textbf{52.24}              & \textbf{23.41} & \textbf{23.47}            & \textbf{24.26}            \\ \midrule
\multicolumn{2}{l}{Learning rate}     & 15             & 17             & 19             & 15             & 17             & 19             & 15             & 17                                 & 19                                 & 15             & 17                        & 19                        \\ \midrule
LARS   & \multirow{3}{*}{$B = 16384$} & 79.46          & 80.62          & 38.57          & 35.60          & 39.72          & 41.72          & 48.27          & 48.26                              & 49.05                              & 20.97          & 21.54                     & 22.42                     \\
LAMB   &                              & 14.82          & 11.97          & 77.16          & 00.84          & 00.92          & 37.28          & 42.26          & 44.02                              & 49.65                              & 20.34          & 23.58                     & 23.78                     \\
TVLARS &                              & \textbf{80.20} & 80.42          & \textbf{80.42} & \textbf{38.56} & \textbf{40.32} & \textbf{42.08} & \textbf{49.16} & \textbf{49.84}                     & \textbf{50.15}                     & \textbf{25.82} & \textbf{24.95}            & \textbf{25.91}            \\ \bottomrule
\end{tabular}
}
\end{table*}

\section{Methodology} \label{sec:methodology}
After conducting and comprehending the experiential quantification in Section \ref{sec:lars-mys}, we have identified the issues of warm-up LARS. It becomes evident that the $\gamma^k_t$ is not well-aligned with the characteristics of sharp minimizer distributions in the context of LB settings. Specifically, during the initial phases of the search process, the loss landscape tends to exhibit numerous sharp minimizers\cite{2017-ShartMinima-Generalize, 2017-DL-LARS, 2021-DL-CLARS} that necessitate sufficiently high gradients to facilitate efficient exploration\cite{2018-LR-SharpMinia, 2017-ShartMinima-Generalize}. Furthermore, the learning rate must be adjustable so that the LBT can be fine-tuned to match the behavior of different datasets and learning models. Our method instead directly uses the high initial learning rate to escape sharp minimas\cite{2018-LR-SharpMinia}.  

To this end, we propose a novel algorithm TVLARS (see Algorithm~\ref{alg:TVLARS}) where the main contribution of TVLARS is highlighted) for LBT that aims to take full advantage of the strength of LARS and warm-up strategy along with drawbacks avoidance. A key idea of TVLARS is to ensure the following characteristics: 1) elimination of incremental phase of base learning rate to eliminate redundant unlearnable processes, 2) a configurable base LR function that can be tuned for different data and model types, and 3) a lower threshold for stability and inheriting LARS robustness.

\textbf{1) Initiating Exploration Excitation.} Although warm-up strategy enhances model training stability (Section~\ref{sec:lars-mys}), learning from a strictly small LR prevents the model from tackling poor sharp minimizers, appearing much near initialization point \cite{2022-LR-Decay1}, \cite{2018-LR-SharpMinia}, \cite{2017-ShartMinima-Generalize}. Otherwise, as a result of the steep decline in adaptive LNR, the exploration around the hypersphere of $w^k_t$ is restricted (Theorem \ref{theorem:batch-unbiased-grad}) and does not address the sharp minimizer concern (Section \ref{sec:lars-principle}). Moreover, warm-up does not fulfill the need for LBT training because of the unnecessary linear scaling (Section \ref{sec:understand-empirical-results}). We construct TVLARS as an optimizer that uses a high initial learning rate owing to its ability of early sharp minima evasion, which enhances the loss landscape exploration.  
\begin{align}\label{eq:time-vary}
    \phi_(t) = \frac{1}{\alpha + e^{\psi_t}} + \gamma_{\min} \quad \textrm{where} \quad \psi_t = \lambda(t - d_{\rm e})
\end{align}
\begin{algorithm}[!h]
    \centering
    \caption{
    TVLARS algorithm}\label{alg:TVLARS}
    \small
    \begin{algorithmic}[1]
    \STATE {\bfseries Require:} $w_t^k\in \mathbb{R}^d$, LR $\{\gamma_t^k\}^T_t$, delay factor $\lambda$, batch size $\mathcal{B}$, delay epoch $d_{\rm e}$, scaling factor $\alpha$, time-varying factor $\phi_(t)$, momentum $\mu$, weight decay $w_d$, $\eta$, $\gamma_{\min}$
    \FOR{$e = 1:N$} 
        \STATE Samples $\mathbb{B}_t = \left\{(x^1_t,y^1_t),\cdots,(x^b_t,y^b_t)\right\}$  
        \STATE Compute gradient $\nabla \mathcal{L}(w^k_t) = \frac{1}{\vert \mathbb{B}_t \vert} \sum_{i=1}^b \nabla \ell (x^i_t, y^i_t | w^k_t).$
        \STATE \colorbox{label2}{Update $\phi_(t) = \frac{1}{\alpha + e^{\psi_t}} + \gamma_{\min}$ where $\psi_t = \lambda(t - d_{\rm e})$} \\ 
        \colorbox{label2}{as mentioned in (\ref{eq:time-vary}) and $ \gamma_{\min}$ defined in (\ref{eq:phi_bound}).}
        \STATE Compute layer-wise LR $\gamma_t^k = \eta\times\phi_(t)\times\frac{\|w_t^k\|}{\|\nabla \mathcal{L}(w^k_t) + w_d\|}$
        \STATE Compute momentum $m_{t+1}^k = w_t^k - \gamma_t^k\nabla \mathcal{L}(w^k_t)$
        \STATE Adjust model weight $w_{t+1}^k = m_{t+1}^k + \mu\left(m_{t+1}^k - m_t^k\right).$
    \ENDFOR
    \end{algorithmic}
\end{algorithm}
\textbf{2) Learning Rate Decay.} To avoid instability of training due to the high initial learning rate, we use a parameter $d_{\rm e}$ specifying the number of delayed epochs as inspired by the warm-up strategy. After $d_{\rm e}$ epochs, the base learning rate is annealed via Equation (\ref{eq:time-vary}), which is the time-varying component used to tackle. According to the mathematical discussion of the LARS principle (Section \ref{sec:lars-principle}) and the aforementioned characteristic of LARS, LAMB with and without a warm-up strategy (Section \ref{sec:lars-discussion}, \ref{sec:understand-empirical-results}), the LNR $\frac{\|w\|}{\|\nabla \mathcal{L}(w)\|}$ tends to be exploding as the model is stuck at local sharp minima, then $\|\nabla \mathcal{L}(w_t)\|$ decay faster than $\|w_t\|$. 

\begin{figure}[!ht]
     \centering
     \begin{subfigure}[b]{0.21\textwidth}
         \centering
         \includegraphics[width=\textwidth]{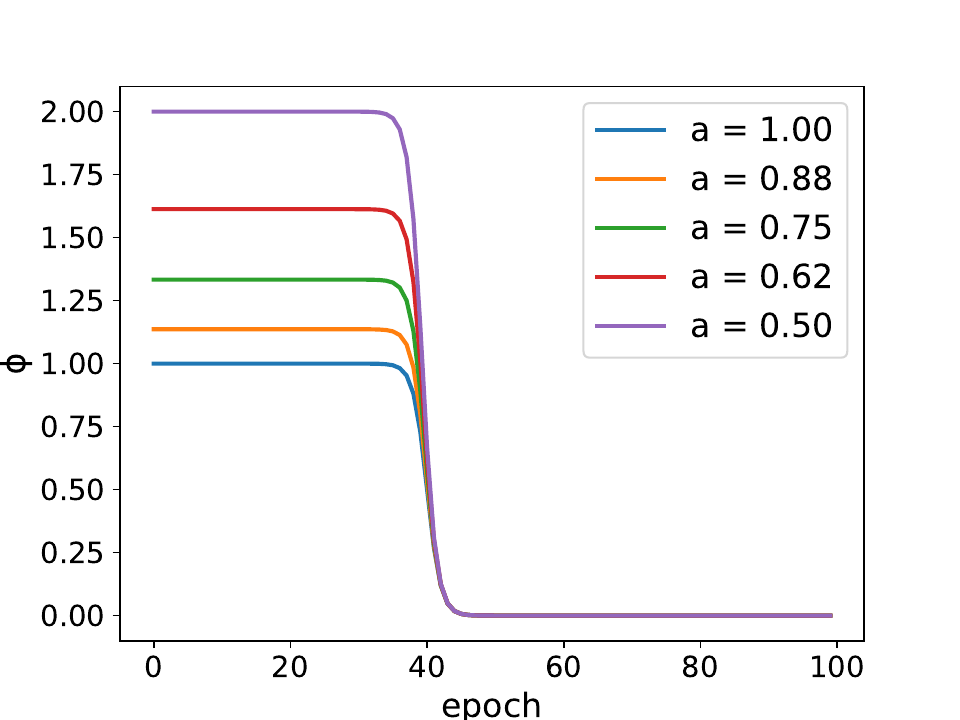}
         \caption{$\lambda = 10^{-2}$}
         \label{fig:decay-0.01}
     \end{subfigure}
     \begin{subfigure}[b]{0.21\textwidth}
         \centering
         \includegraphics[width=\textwidth]{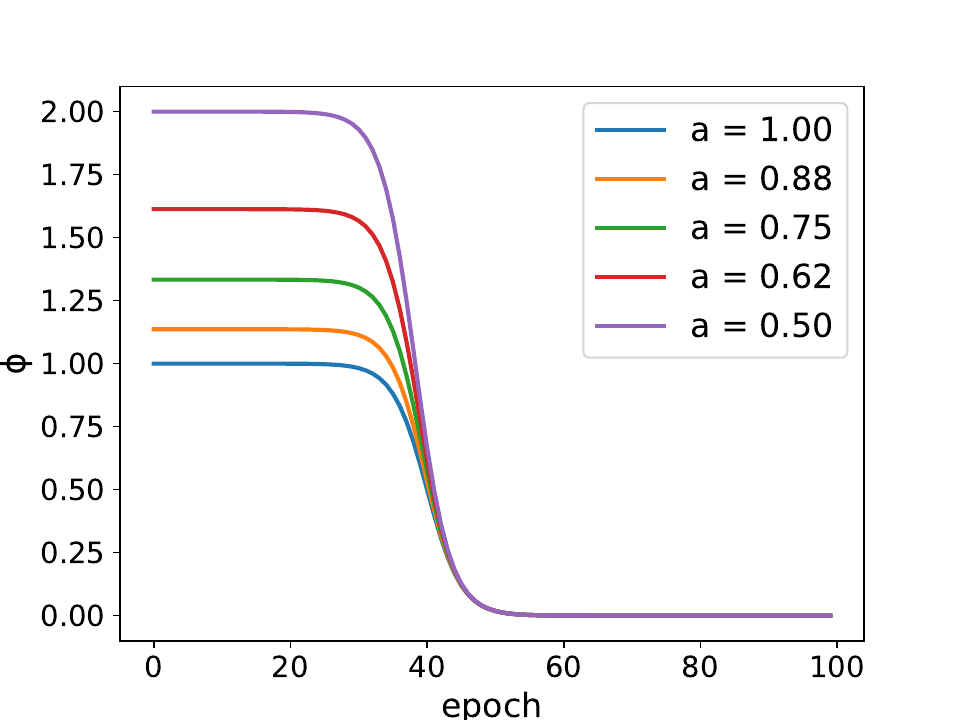}
         \caption{$\lambda = 10^{-3}$}
         \label{fig:decay-0.001}
     \end{subfigure}
     \begin{subfigure}[b]{0.21\textwidth}
         \centering
         \includegraphics[width=\textwidth]{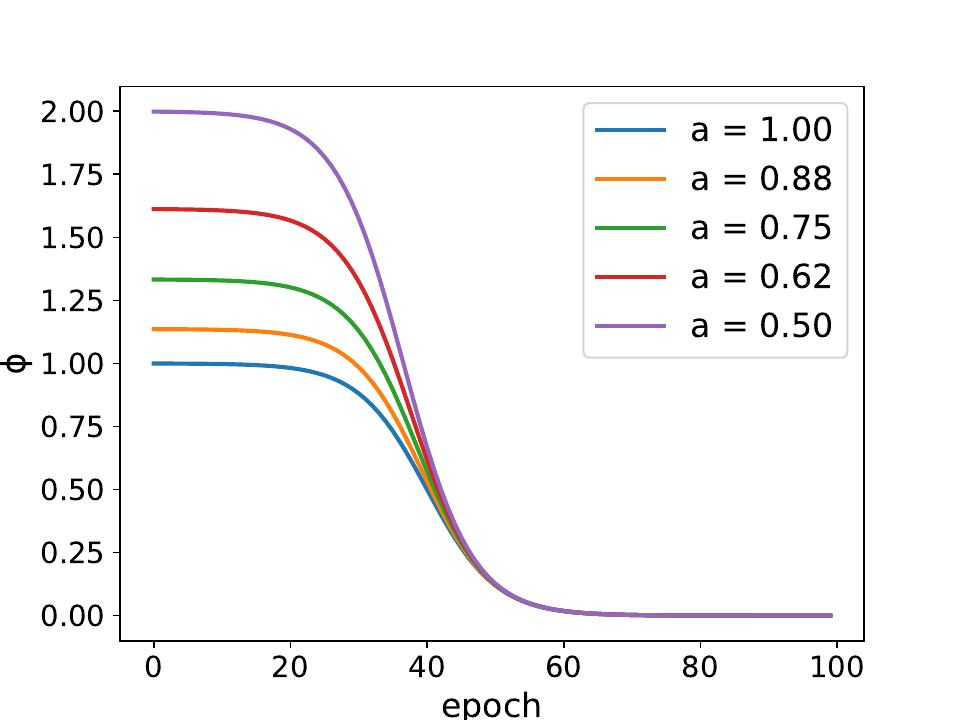}
         \caption{$\lambda = 10^{-4}$}
         \label{fig:decay-0.0001}
     \end{subfigure}
     \begin{subfigure}[b]{0.21\textwidth}
         \centering
         \includegraphics[width=\textwidth]{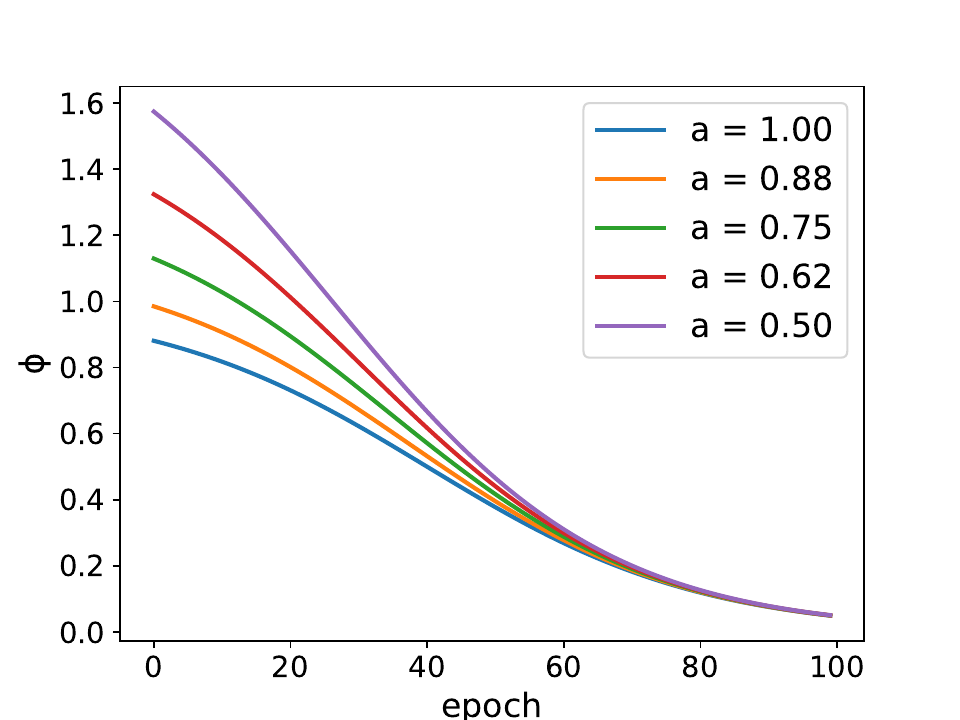}
         \caption{$\lambda = 10^{-6}$}
         \label{fig:decay-0.000001}
     \end{subfigure}
     \caption{The decay plot of TVLARS algorithm under different settings.}
     \label{fig:decay-plot}
     \vspace{-0.5cm}
\end{figure}

\textbf{3) Configurable Decay Rate.} The proposed time-varying component $\phi_(t)$ is constructed based on the sigmoid function whose curve is smooth to keep the model away from unstable learning (refers to Figure \ref{fig:decay-plot}). $\lambda$ is the soft-temperature factor, which controls the steepness of the time-variant component. Specifically, as $\lambda$ is large, the steepness is significant, and the time-variant component $\phi_(t)$ reduces faster. Therefore, by changing the soft-temperature factor $\lambda$, we can adjust the transition duration from gradient exploration to stable learning (i.e., we can achieve a stable learning phase faster as $\lambda$ is larger, and otherwise). 

\textbf{4) Alignment with LARS.} When the learning process is at the latter phase, it is essential for the TVLARS behavior to align with that of LARS to inherit LARS's robustness (refer to Figure \ref{fig:decay-plot}). We introduce two parameters $\alpha$ and $\gamma_{\min}$ used to control the bound for time-varying component $\phi_(t)$. For any $\alpha$, $\gamma_{\min}$ $\in\mathbb{R}$, the lower and upper bounds for $\phi_(t)$ is shown in Equation (\ref{eq:phi_bound}).
\begin{equation}\label{eq:phi_bound}
     \gamma_{\min} \leq \phi_(t) \leq \frac{1}{\alpha + \exp\{-\lambda d_{\rm e}\}}
\end{equation}
\begin{proof}
    We then analyzed its derivative (refers to Equation (\ref{eq:apx-bound})) to gain deeper insights into how it can affect the gradient scaling ratio. 
    
    \begin{align}\label{eq:apx-bound}
        \frac{\partial \phi(t)}{\partial t} = \frac{-\lambda\exp\{\lambda t - \lambda d_{\rm e}\}}{\left(\alpha + \exp\{\lambda t - \lambda d_{\rm e}\}\right)^2} \leq 0 \quad &\textrm{w.r.t.} 
        \\
        \begin{cases}
            \left(\alpha + \exp\{\lambda t - \lambda d_{\rm e}\}\right)^2 \geq 0 \\
            \lambda\exp\{\lambda t - \lambda d_{\rm e}\} \geq 0
        \end{cases}
    \end{align}
    
    Thus function $\phi(t)$ is a decreasing function for any $t \in \left[0, T\right)$. Therefore, the minimum value of the above function at $T \rightarrow \infty$ is $\gamma_{\min}$ as follows:
    \begin{equation}\label{eq:apx-bound-min}
        \min\{\phi(t)\} = \underset{t \rightarrow \infty}{\lim} \frac{1}{\alpha + \exp\{\lambda(t - d_{\rm e})\}} + \gamma_{\min} = \gamma_{\min}
    \end{equation}
    
    While the maximum value at $t = 0$ is as follows:
    \begin{equation}
        \max\{\phi(t)\} = \phi(t = 0) = \frac{1}{\alpha + \exp\{-\lambda d_{\rm e}\}}
    \end{equation}
    Hence the time-varying component has the following bounds:
    \begin{equation}
        \gamma_{\min} \leq \phi(t) \leq \frac{1}{\alpha + \exp\{-\lambda d_{\rm e}\}}
    \end{equation}
\end{proof}
This boundary ensures that the $\gamma^k_t$ does not explode during training. Otherwise, to guarantee that all experiments are conducted fairly, $\alpha$ is set to 1, which means there is no increment in the initial LR, and the minimum value of the LR is also set to $\gamma_{\min}$.
\begin{figure*}[!ht]
     \centering
     \begin{subfigure}[b]{\vizsizelambda\textwidth}
         \centering
         \includegraphics[width=\textwidth]{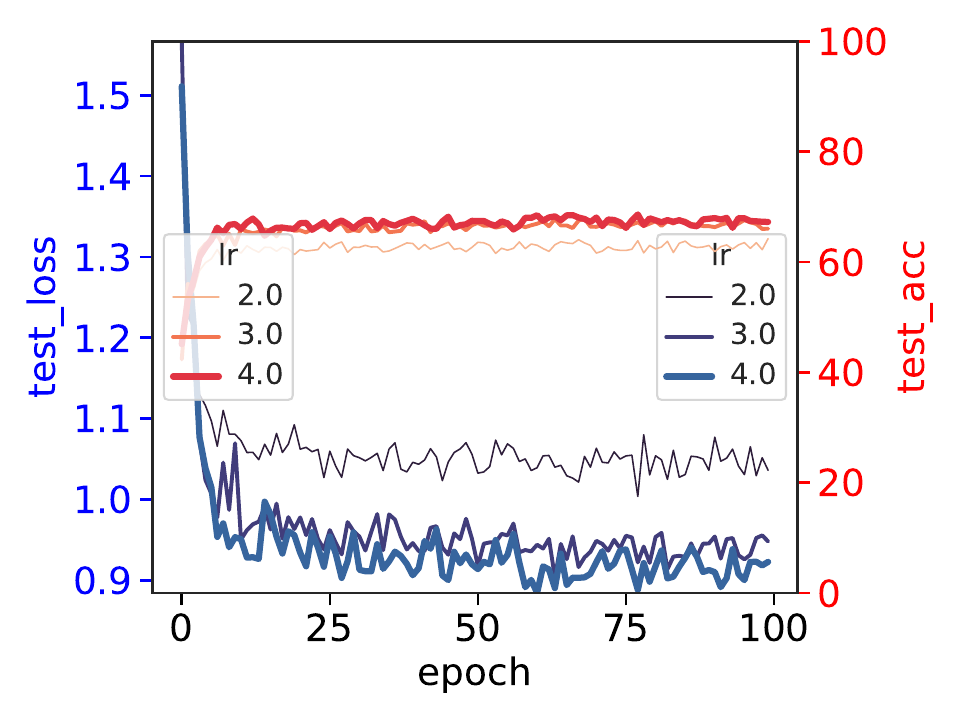}
         \caption{$\lambda = 10^{-2}$}
         \label{fig:abs-test-tvlars-0.01-1024}
     \end{subfigure}
     \begin{subfigure}[b]{\vizsizelambda\textwidth}
         \centering
         \includegraphics[width=\textwidth]{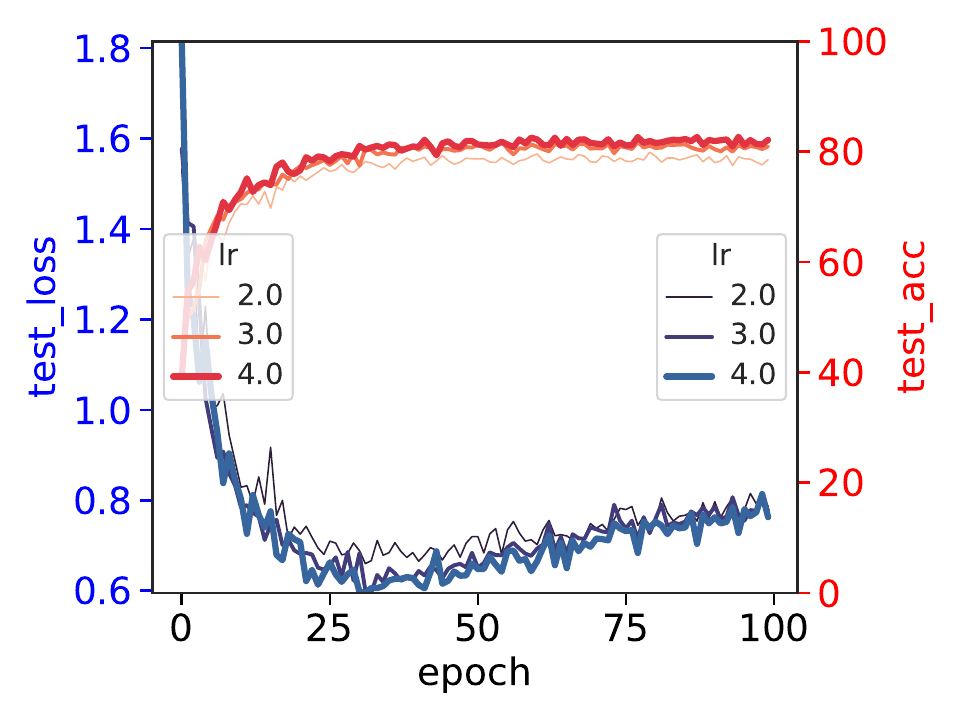}
         \caption{$\lambda = 10^{-3}$}
         \label{fig:abs-test-tvlars-0.001-1024}
     \end{subfigure}
     \begin{subfigure}[b]{\vizsizelambda\textwidth}
         \centering
         \includegraphics[width=\textwidth]{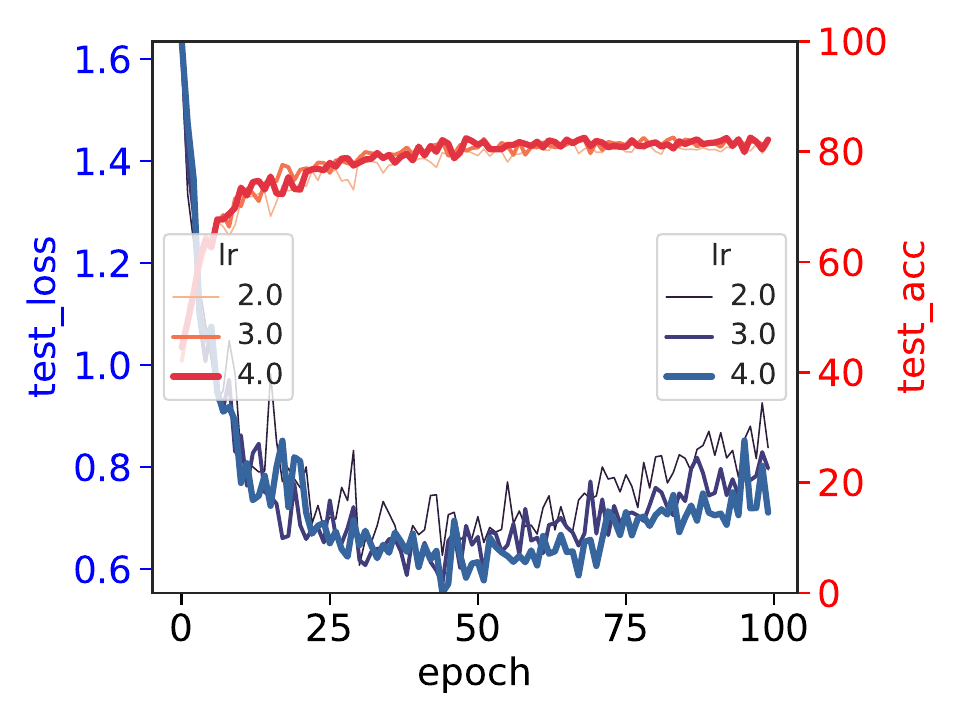}
         \caption{$\lambda = 10^{-5}$}
         \label{fig:abs-test-tvlars-0.0001-1024}
     \end{subfigure}
     \begin{subfigure}[b]{\vizsizelambda\textwidth}
         \centering
         \includegraphics[width=\textwidth]{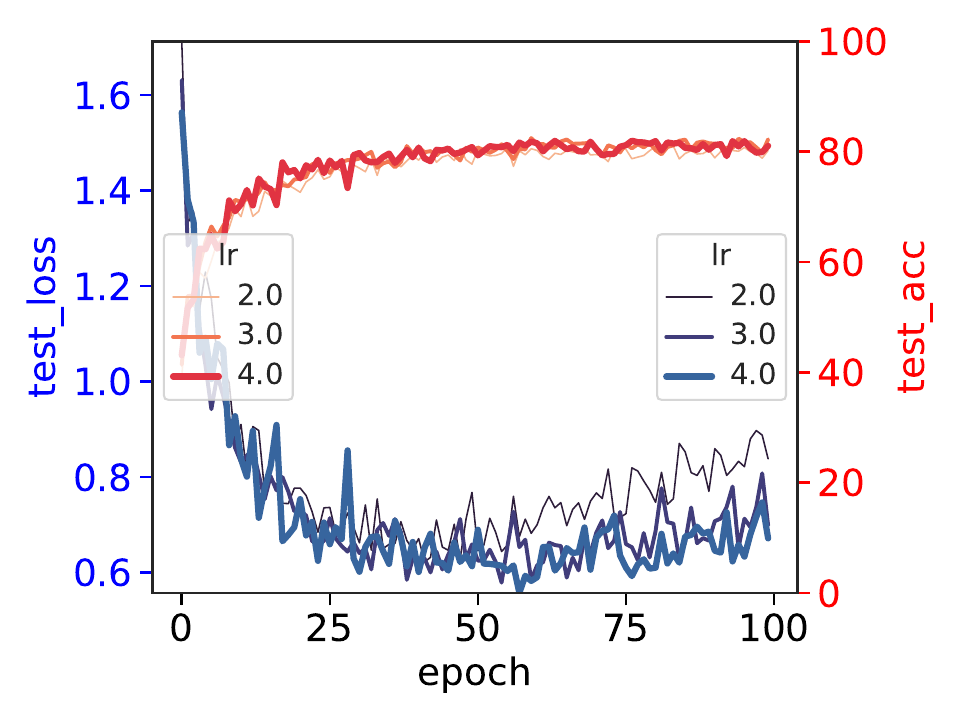}
         \caption{$\lambda = 10^{-6}$}
         \label{fig:abs-test-tvlars-0.000001-1024}
     \end{subfigure}
     \begin{subfigure}[b]{\vizsizelambda\textwidth}
         \centering
         \includegraphics[width=\textwidth]{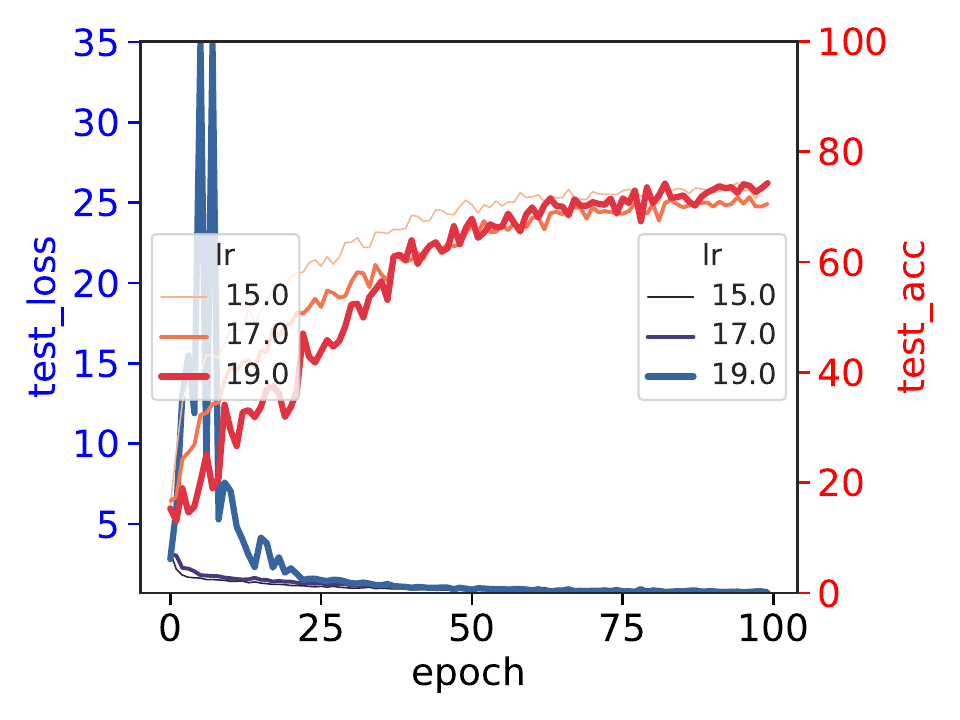}
         \caption{$\lambda = 10^{-2}$}
         \label{fig:abs-test-tvlars-0.01-16384}
     \end{subfigure}
     \begin{subfigure}[b]{\vizsizelambda\textwidth}
         \centering
         \includegraphics[width=\textwidth]{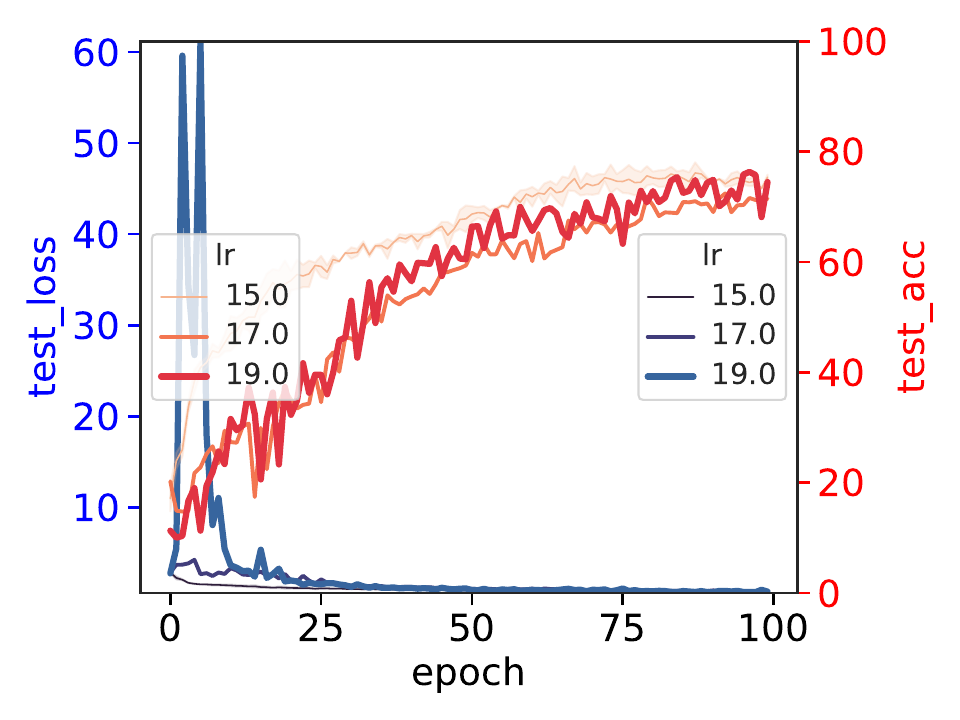}
         \caption{$\lambda = 10^{-3}$}
         \label{fig:abs-test-tvlars-0.001-16384}
     \end{subfigure}
     \begin{subfigure}[b]{\vizsizelambda\textwidth}
         \centering
         \includegraphics[width=\textwidth]{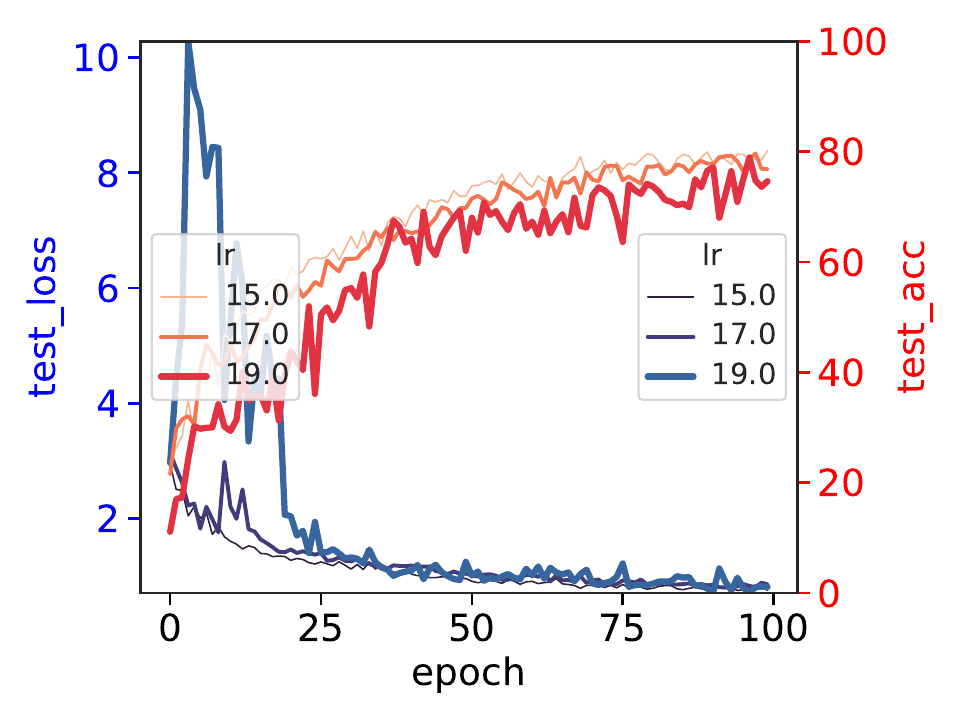}
         \caption{$\lambda = 10^{-5}$}
         \label{fig:abs-test-tvlars-0.0001-16384}
     \end{subfigure}
     \begin{subfigure}[b]{\vizsizelambda\textwidth}
         \centering
         \includegraphics[width=\textwidth]{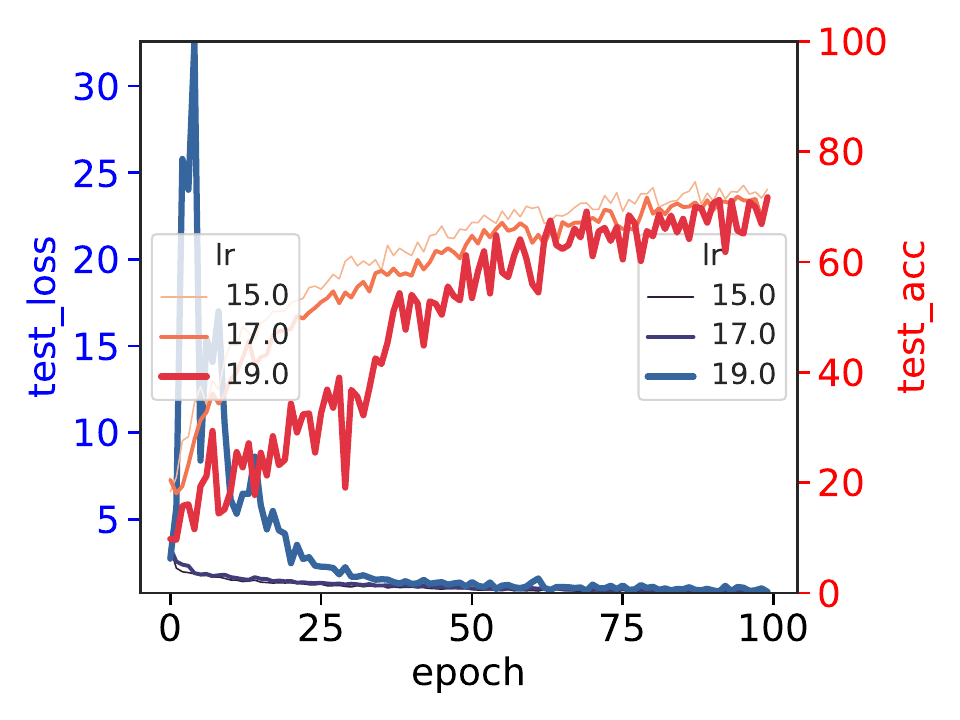}
         \caption{$\lambda = 10^{-6}$}
         \label{fig:abs-test-tvlars-0.000001-16384}
     \end{subfigure}
     \caption{Quantitative comparison in learning stability($\mathcal{B} \in \{1024, 16384\}$, which are upper and lower row, respectively).}
     \label{fig:abs-test-tvlars-decay}
\end{figure*}

\section{Experiment}\label{sec:exp-eval}
\subsection{Experimental Settings}
\textbf{Problems.} The vanilla classification (CLF) and Self Supervised Learning (SSL) are conducted and evaluated by the accuracy (\%) metric. Regarding the success of SSL, we conduct the SOTA Barlow Twins\footnote{\url{https://github.com/facebookresearch/barlowtwins}} (BT) \cite{zbontar2021barlow} to compare the performance between LARS\cite{2017-DL-LARS}, LAMB\cite{2020-DL-LAMB}, and TVLARS (ours). To be more specific, the BT SSL problem consists of two stages: SSL and CLF stage, conducted with 1000 and 100 epochs, respectively. The dimension space used in the first stage of BT is 4096 stated to be the best performance setting in \cite{zbontar2021barlow}, along with two sub Fully Connected $2048$ nodes layers integrated before the latent space layer. We also perform the CLF stage of BT with vanilla Stochastic Gradient Descent (SGD) \cite{sgd} along with the Cosine Annealing \cite{cosine} scheduler as implemented by BT authors. The main results of CLF and BT tasks are shown at \ref{exp:task}. 

\textbf{Datasets and Models.} To validate the performance of the optimizers, we consider two different data sets with distinct contexts: CIFAR10 \cite{2010-DL-Cifar10} ($32 \times 32$, 10 modalities) and Tiny ImageNet \cite{2010-DL-Imagenet} ($64 \times 64$, 200 modalities). Otherwise, the two SOTA model architectures ResNet18 and ResNet34 \cite{he2015deep} are trained separately from scratch on CIFAR10 and TinyImageNet. To make a fair comparison between optimizers, the model weight is initialized in Kaiming Uniform Distribution \cite{kaim-dis}. 

\textbf{Optimizers and Warm-up Strategy.} Specifically, we explore the characteristics of LARS and LAMB by applying them with and without a warm-up strategy, aiming to understand the LNR $\|w\|/\|\nabla\mathcal{L}(w)\|$. LARS and LAMB official source codes are implemented inside NVCaffe \cite{jia2014caffe} and Tensorflow \cite{tensorflow}. the Pytorch version of LARS used in this research is verified and referenced from Lightning Flash \footnote{\url{https://github.com/Lightning-Universe/lightning-flash}}. LAMB Pytorch code, on the other hand, verified and referenced from Pytorch Optimizer \footnote{\url{https://github.com/jettify/pytorch-optimizer}}. Besides, the warm-up strategy \cite{gotmare2018closer} contains two separate stages: linear LR scaling and LR decay. In this first stage, $\gamma_t$ becomes greater gradually by iteratively updating $\gamma_t = \gamma_\textrm{target}\times\frac{t}{T}$ for each step ($T = 20$ epochs). Then, $\gamma_t$ goes down moderately by $\gamma_t = \gamma_\textrm{target}\times q + \gamma_{min}\times (1 - q)$ where $q = \frac{1}{2}\times(1+\cos\frac{\pi t}{T})$, which is also conducted in \cite{zbontar2021barlow, chen2020simple, bardes2022vicreg}. In experiments where LARS and LAMB are conducted without a warm-up strategy, a simple Polynomial Decay is applied instead. TVLARS, on the contrary, is conducted without using a LR scheduler. 

\textbf{Hyperparameters and System.} The LRs are determined using the square root scaling rule \cite{2014-DL-WeirdTrick}, which is described detailedly at \ref{exp:abs-lr}. We considered the following sets of $\gamma_\textrm{target}$: \{1, 2, 3\}, \{2, 3, 4\}, \{5, 6, 7\}, \{8, 9, 10\}, \{10, 12, 15\}, and \{15, 17, 19\}, which are associated with $\mathcal{B}$ of 512, 1024, 2048, 4096, 8192, and 16384, respectively. Otherwise, $w_d$, and $\mu$ is set to $5\times 10^{-4}$, and $0.9$, respectively. Besides, all experiments are conducted on Ubuntu 18.04 by using Pytorch \cite{paszke2019pytorch} with multi Geforce 3080 GPUs settings, along with Syncing Batch Normalization, which is proven to boost the training performance \cite{2014-DL-WeirdTrick}, \cite{Yao_2021_CVPR}, \cite{Huang_2018_CVPR}.
\subsection{Classification and Self-supervised Problem}\label{exp:task}
Table \ref{tab:perform_cls} demonstrates the model performance trained with LARS, LAMB, and TVLARS. The table contains two main columns for CIFAR and Tiny ImageNet with associated tasks: CLF and BT. Besides CIFAR, Tiny ImageNet is considered to be a rigorously challenging dataset ($100,000$ images of $200$ classes), which is usually used to evaluate the performance of LBT SSL tasks. Overall, TVLARS achieves the highest accuracies, which outperforms LARS $4\sim7\%$, $3\sim4\%$, and $1\sim2\%$ in each pair of $\gamma_\textrm{target}$ and $\mathcal{B}$. LARS and LAMB, besides, are immobilized by the poor sharp minima indicated by the LNR $\|w\|/\|\nabla\mathcal{L}(w)\| \rightarrow \infty$ as $\|\nabla\mathcal{L}(w)\| \rightarrow 0$ (refers to Figure \ref{fig:bigwhy1}), which although creates a high adaptive LR to escape the trapped minima, $\nabla\mathcal{L}(w)/\|\nabla\mathcal{L}(w)\|$ only influences the percentage update to the layer-wise model parameters hence cannot tackle the problem of sharp minima thoroughly (more analysis at Section \ref{sec:lars-mys}). This phenomenon is caused by the warm-up strategy partly making LARS and LAMB converge slowly and be stuck at sharp minima. TVLARS ($\lambda = 10^{-3}$) reach the optimum after $20$ epochs, compared to $60\sim80$ epochs from LARS and LAMB ($\gamma_\textrm{target} = 19, \mathcal{B} = 16K$).

\subsection{Ablation test}\label{exp:abs}
\begin{figure}[!h]
     \centering
     \begin{subfigure}[b]{\vizsizeB\textwidth}
         \centering
         \includegraphics[width=\textwidth]{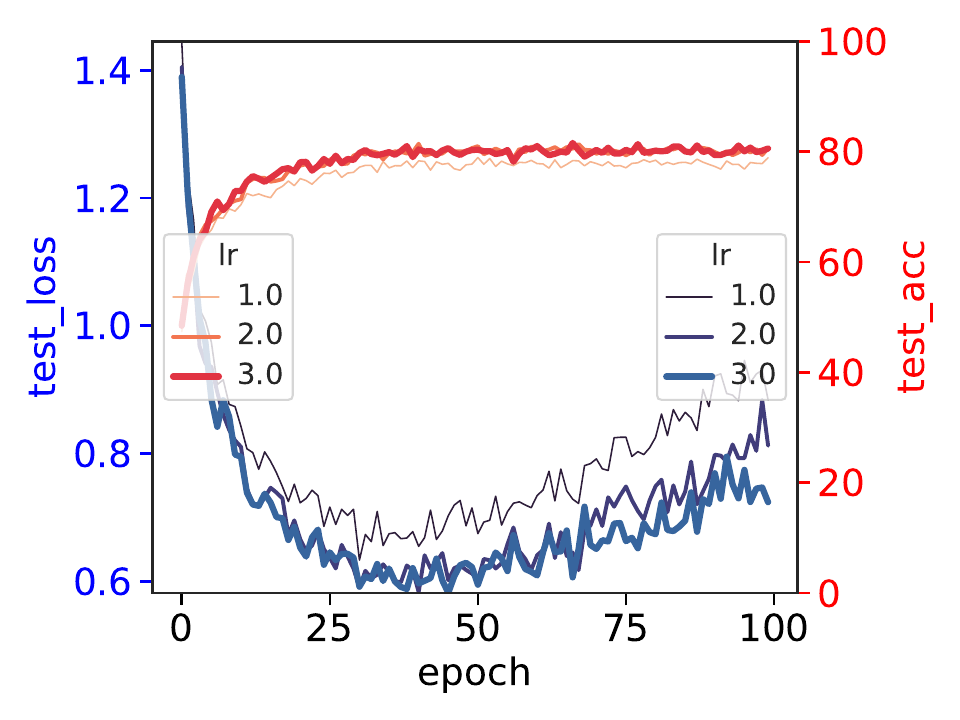}
         \caption{$\mathcal{B} = 512$}
         \label{fig:abs-testlr-tvlars-0.0001-512}
     \end{subfigure}
     \begin{subfigure}[b]{\vizsizeB\textwidth}
         \centering
         \includegraphics[width=\textwidth]{image-lib/tvlars_perform/cifar10_resnet18_xavier_uniform/tvlars_0.0001/1024/test_loss_acc.pdf}
         \caption{$\mathcal{B} = 1024$}
         \label{fig:abs-testlr-tvlars-0.0001-1024}
     \end{subfigure}
     \begin{subfigure}[b]{\vizsizeB\textwidth}
         \centering
         \includegraphics[width=\textwidth]{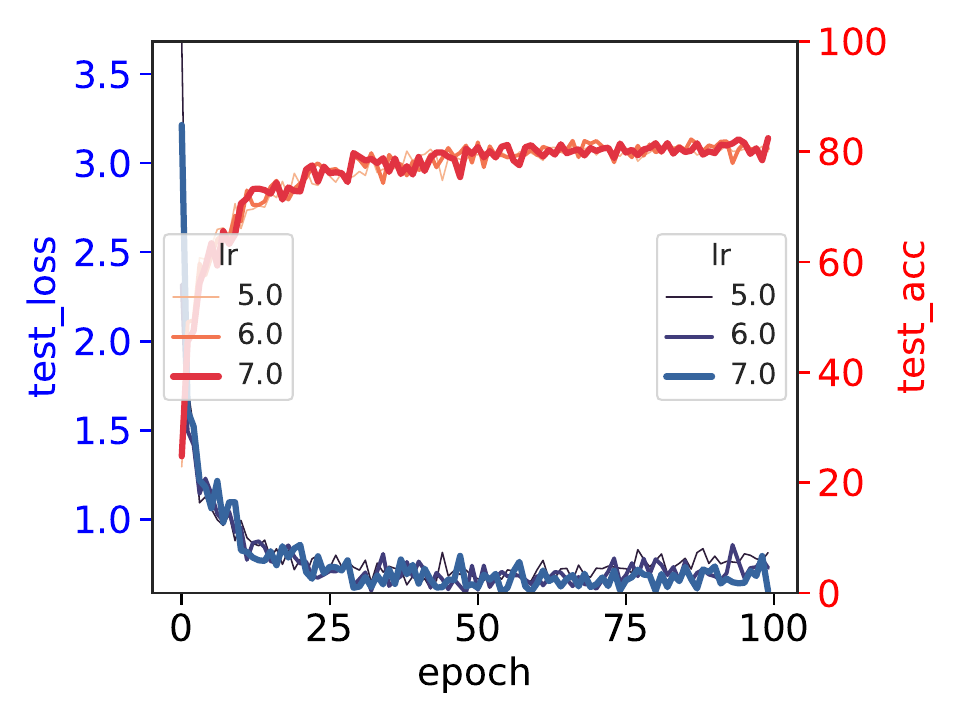}
         \caption{$\mathcal{B} = 2048$}
         \label{fig:abs-testlr-tvlars-0.0001-2048}
     \end{subfigure}
     \begin{subfigure}[b]{\vizsizeB\textwidth}
         \centering
         \includegraphics[width=\textwidth]{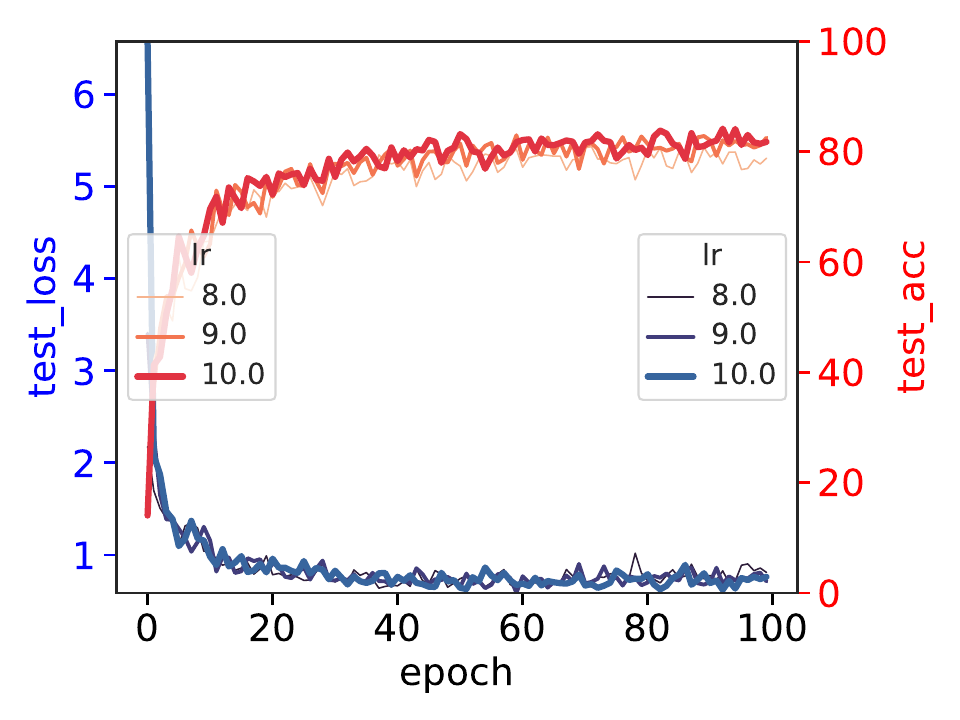}
         \caption{$\mathcal{B} = 4096$}
         \label{fig:abs-testlr-tvlars-0.0001-4096}
     \end{subfigure}
     \begin{subfigure}[b]{\vizsizeB\textwidth}
         \centering
         \includegraphics[width=\textwidth]{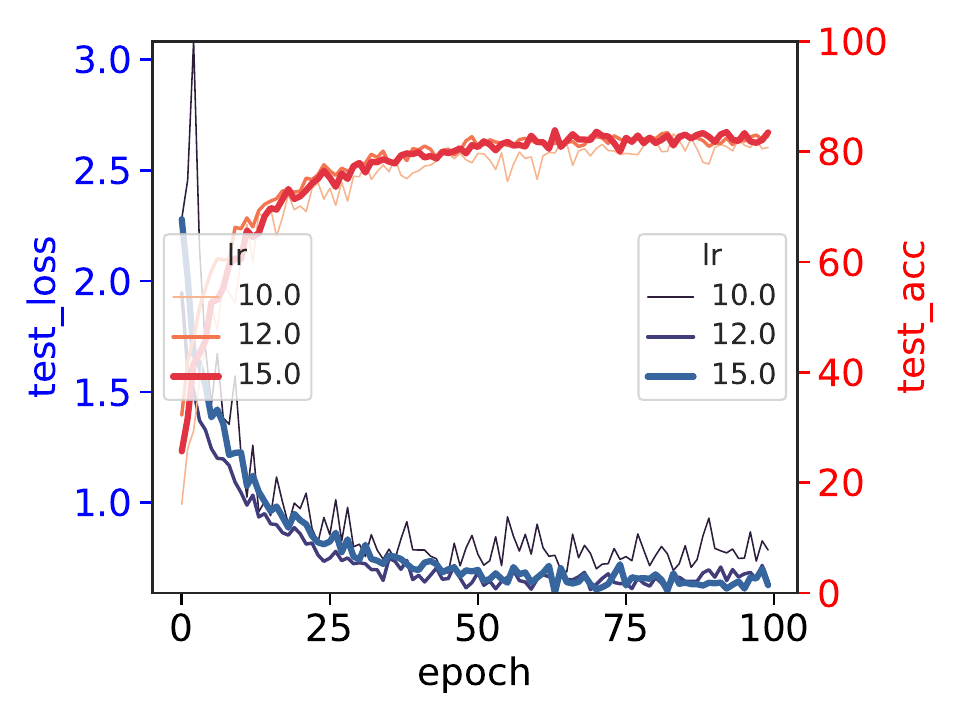}
         \caption{$\mathcal{B} = 8192$}
         \label{fig:abs-testlr-tvlars-0.0001-8192}
     \end{subfigure}
     \begin{subfigure}[b]{\vizsizeB\textwidth}
         \centering
         \includegraphics[width=\textwidth]{image-lib/tvlars_perform/cifar10_resnet18_xavier_uniform/tvlars_0.0001/16384/test_loss_acc.pdf}
         \caption{$\mathcal{B} = 16384$}
         \label{fig:abs-testlr-tvlars-0.0001-16384}
     \end{subfigure}
     \caption{Quantitative analysis of $\gamma_{\rm target}$ ($\lambda = 0.0001$)} 
     \label{fig:abs-testlr-tvlars}
\end{figure}
\subsubsection{Decay coefficients}\label{exp:abs-decay}
Decay coefficient $\lambda$ is a simple regularized parameter, used to anneal the LR to enhance the model performance. Figure \ref{fig:abs-test-tvlars-decay} demonstrates the experiments' result ($\mathcal{B} \in \{1024, 16384\}$) conducted with values of $\lambda$. Otherwise, we set $\alpha = 1$, so that the $\gamma_\textrm{target}$ for all experiments are the same. Besides, $d_{\rm e}$, the number of delay epochs is set to $10$ and $\gamma_{\rm min}$ is set to $\frac{\mathcal{B}}{\mathcal{B}_{\rm base}} \times 0.001$ for both TVLARS and LARS experiment. 


In 1K batch-sized experiments, there is a large generalization gap among $\gamma_\textrm{target}$ for $\lambda \in \{0.01, 0.005\}$. Smaller $\lambda$, otherwise, enhance the model accuracy by leaving $\gamma_\textrm{target}$ to stay nearly unchanged longer, which boosts the ability to explore loss landscape and avoid sharp minima. As a result, the model achieve higher accuracy: $\sim 84\%$ ($\lambda = 10^{-5}$), compared to result stated in Table \ref{tab:perform_cls}. In contrast, the model performs better with larger values of $\lambda$ (i.e. 0.01, 0.005, and 0.001) in 16K batch-sized experiments. Owing to high initial $\gamma_\textrm{target}$, it is easier for the model to escape the sharp minima which do not only converge within $20$ epochs (four times compared to LARS) but also to a low loss value ($\sim 2$), compared to just under $20$ ($\lambda = 10^{-6}$). This problem is owing to the elevated $\gamma_\textrm{target}$, which makes the leaning direction fluctuate dramatically in the latter training phase (refers to Figures \ref{fig:abs-test-tvlars-0.0001-16384}, \ref{fig:abs-test-tvlars-0.000001-16384}). 
\begin{figure}[!h]
     \centering
     \begin{subfigure}[b]{\vizsizeweight\textwidth}
         \centering
         \begin{subfigure}[b]{0.49\textwidth}
             \includegraphics[width=\textwidth]{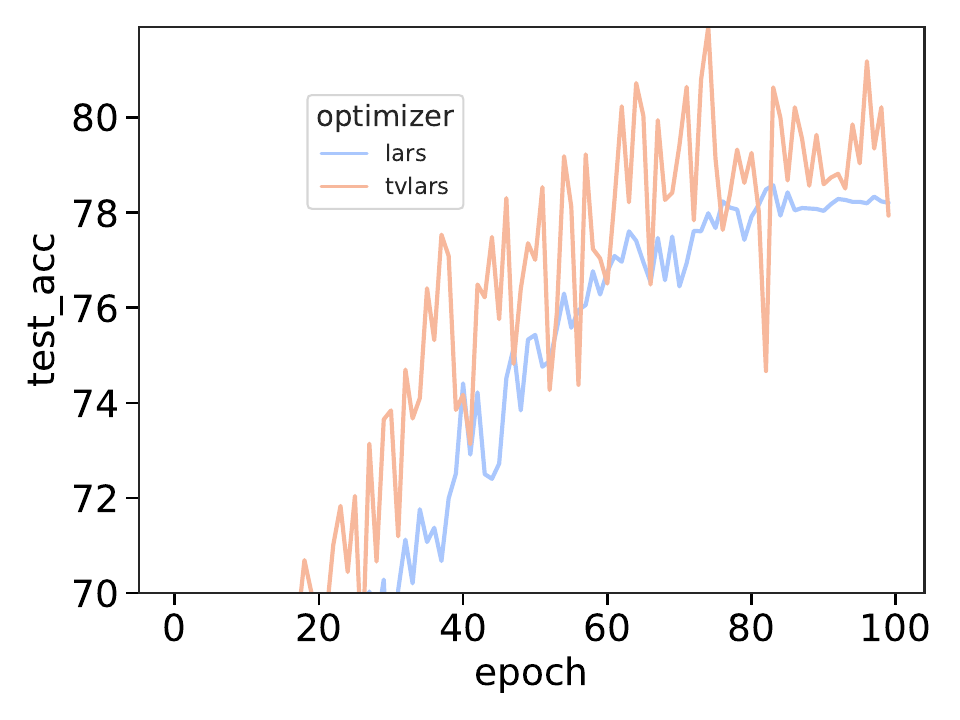}   
         \end{subfigure}
         \begin{subfigure}[b]{0.49\textwidth}
             \includegraphics[width=\textwidth]{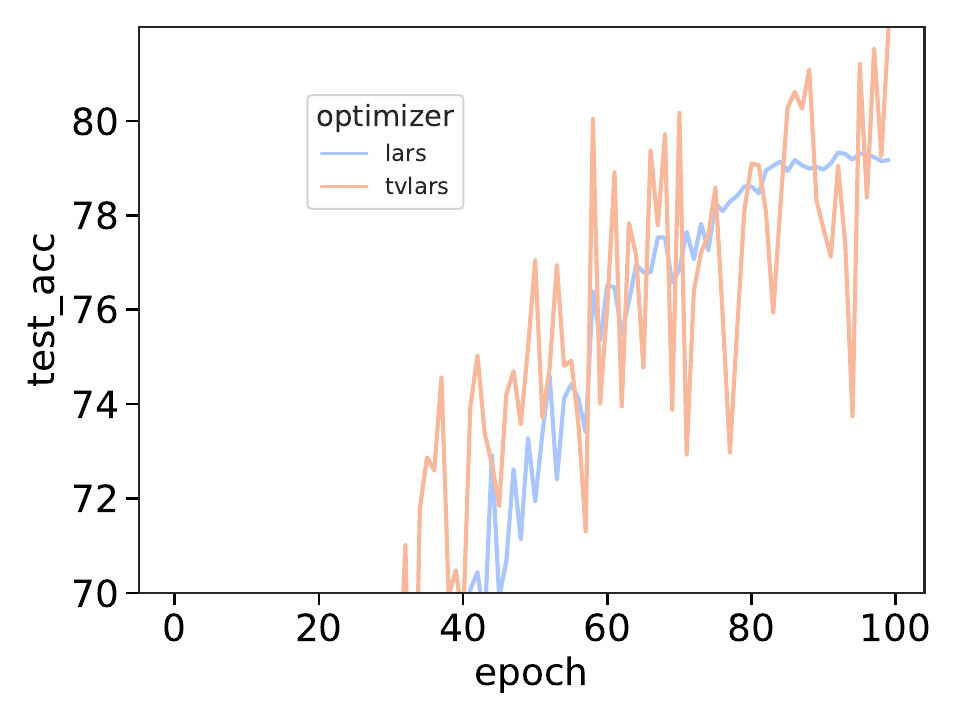}
         \end{subfigure}
     \end{subfigure}
     \begin{subfigure}[b]{\vizsizeweight\textwidth}
         \centering
         \begin{subfigure}[b]{0.49\textwidth}
             \includegraphics[width=\textwidth]{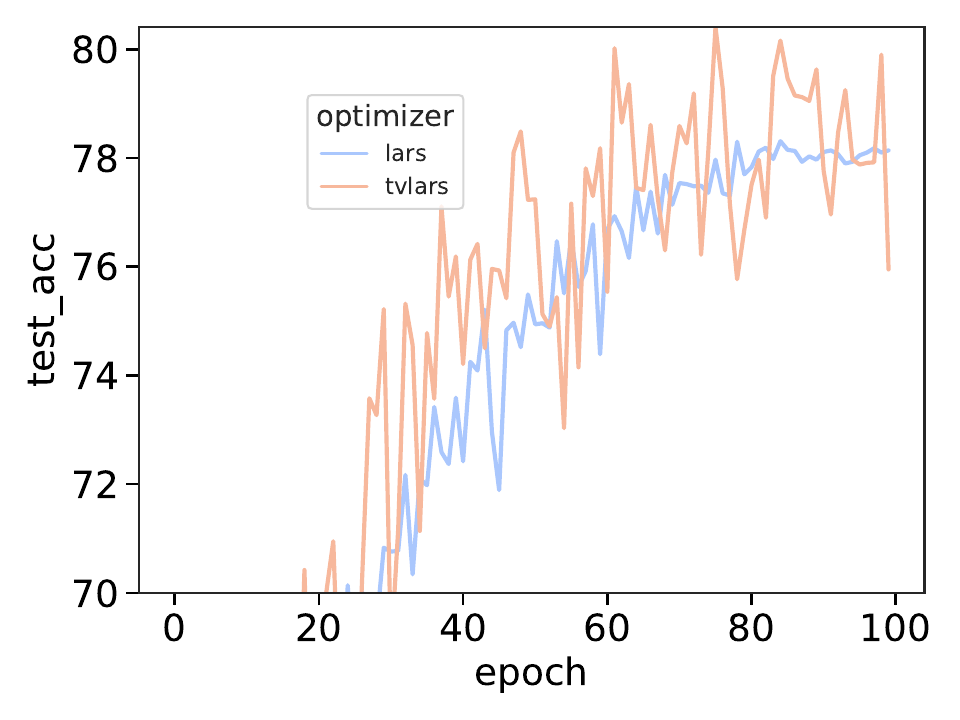}   
         \end{subfigure}
         \begin{subfigure}[b]{0.49\textwidth}
             \includegraphics[width=\textwidth]{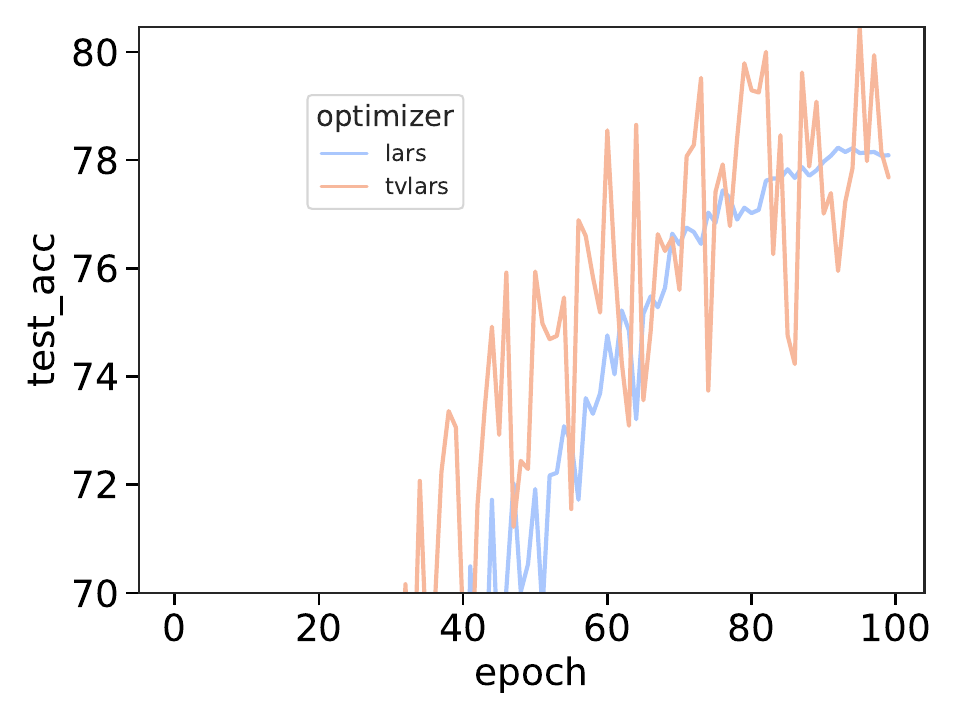}
         \end{subfigure}
         \caption{Xavier Normal}
         \label{fig:abs-winit-xn}
     \end{subfigure}
     \begin{subfigure}[b]{\vizsizeweight\textwidth}
         \centering
         \begin{subfigure}[b]{0.49\textwidth}
             \includegraphics[width=\textwidth]{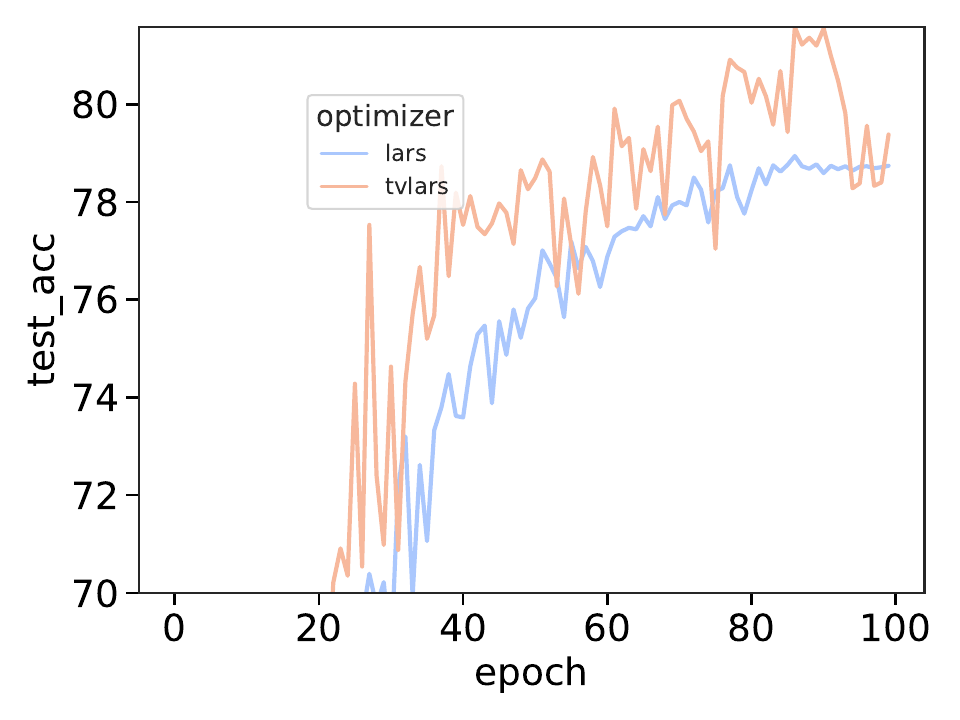}   
         \end{subfigure}
         \begin{subfigure}[b]{0.49\textwidth}
             \includegraphics[width=\textwidth]{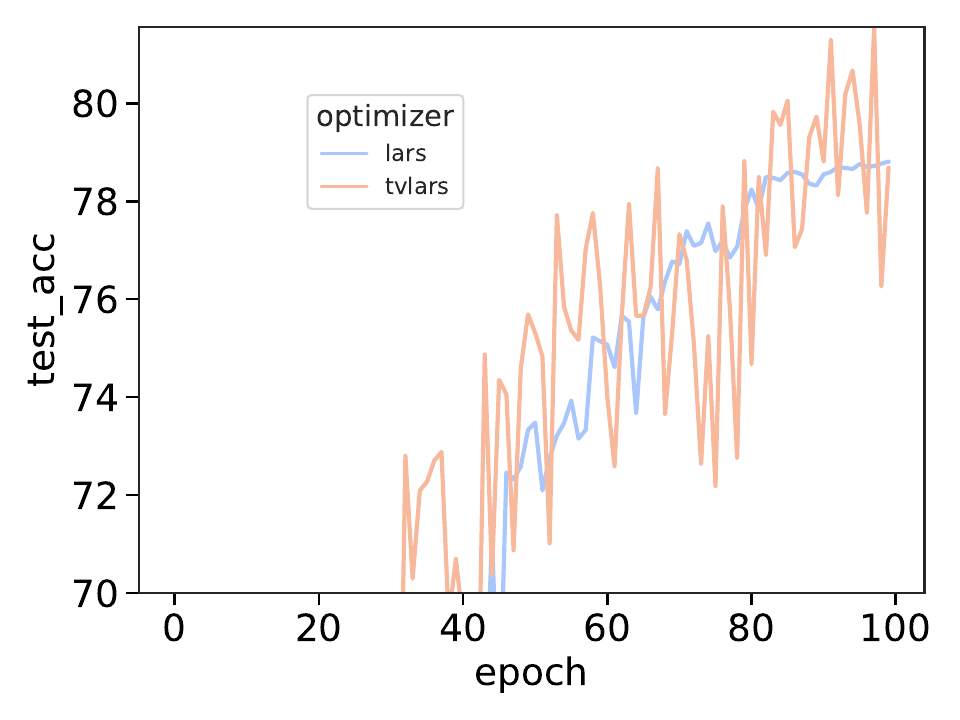}
         \end{subfigure}
     \end{subfigure}
     \begin{subfigure}[b]{\vizsizeweight\textwidth}
         \centering
         \begin{subfigure}[b]{0.49\textwidth}
             \includegraphics[width=\textwidth]{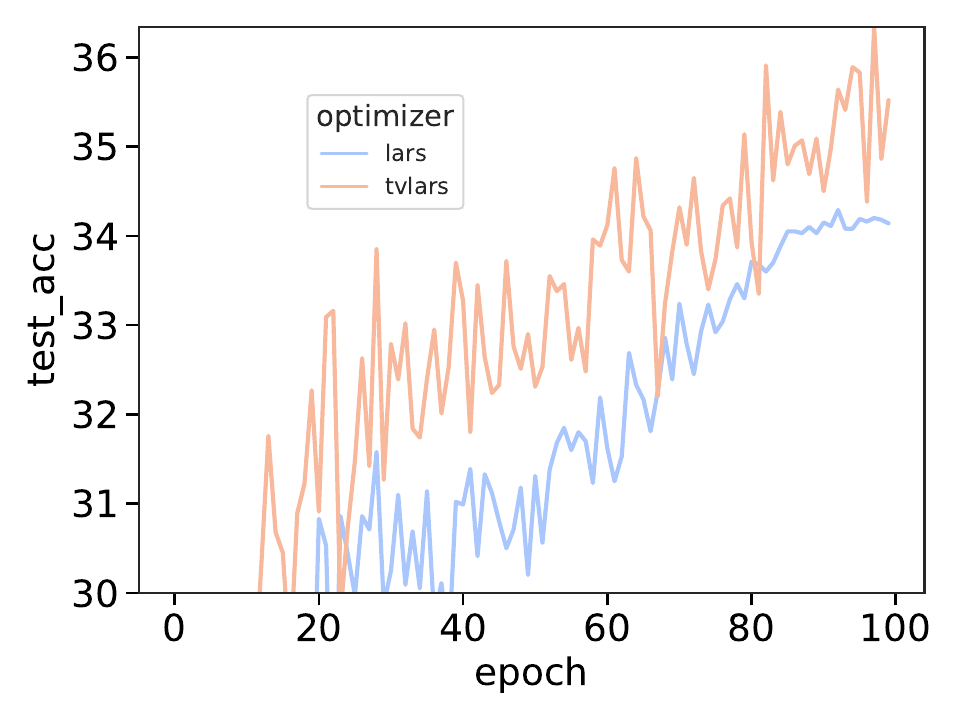}   
         \end{subfigure}
         \begin{subfigure}[b]{0.49\textwidth}
             \includegraphics[width=\textwidth]{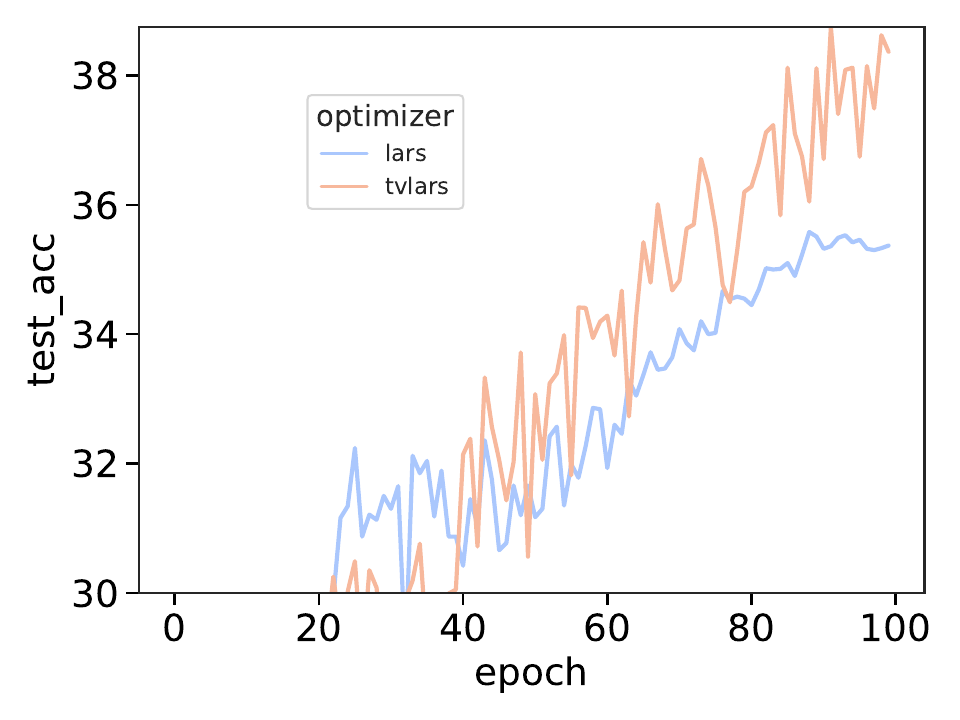}
         \end{subfigure}
         \caption{Kaiming Normal}
         \label{fig:abs-winit-kn}
     \end{subfigure}
     \begin{subfigure}[b]{\vizsizeweight\textwidth}
         \centering
         \begin{subfigure}[b]{0.49\textwidth}
             \includegraphics[width=\textwidth]{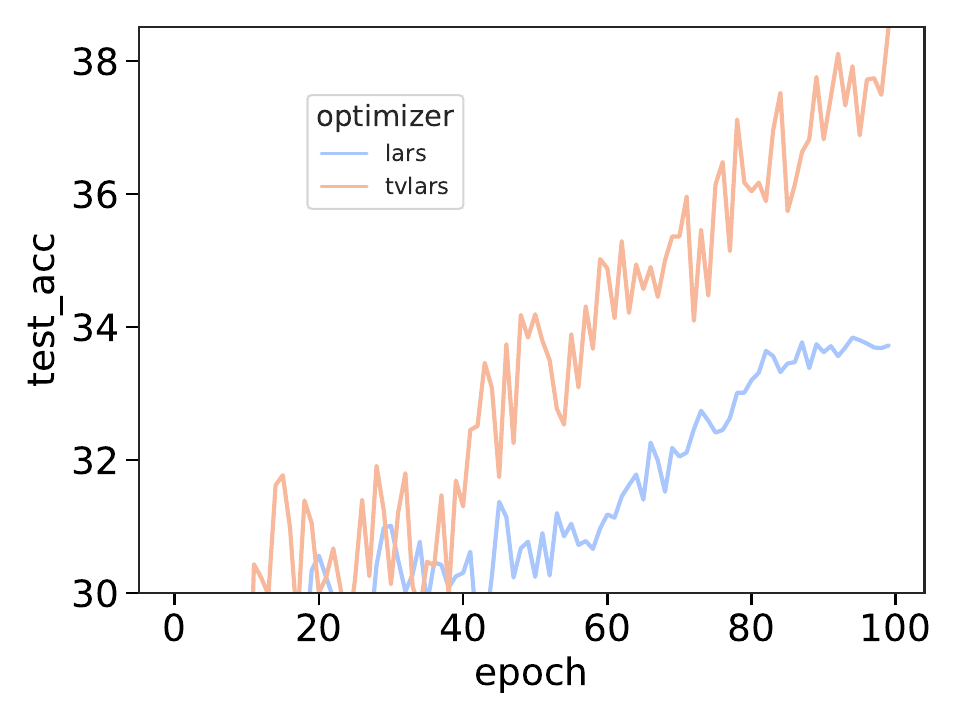}   
         \end{subfigure}
         \begin{subfigure}[b]{0.49\textwidth}
             \includegraphics[width=\textwidth]{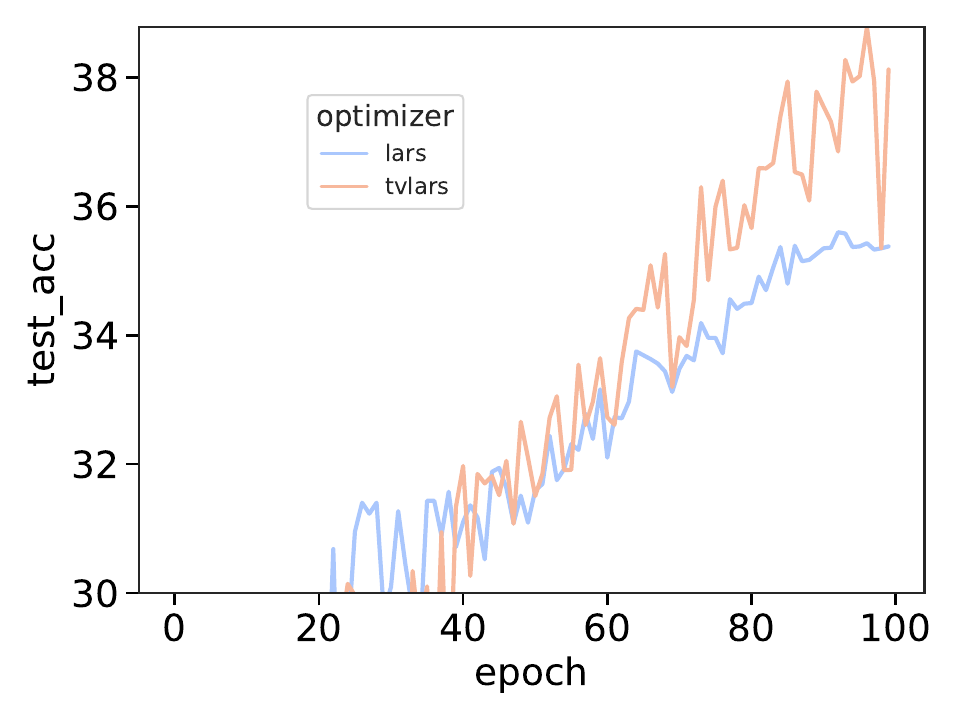}
         \end{subfigure}
     \end{subfigure}
     \begin{subfigure}[b]{\vizsizeweight\textwidth}
         \centering
         \begin{subfigure}[b]{0.49\textwidth}
             \includegraphics[width=\textwidth]{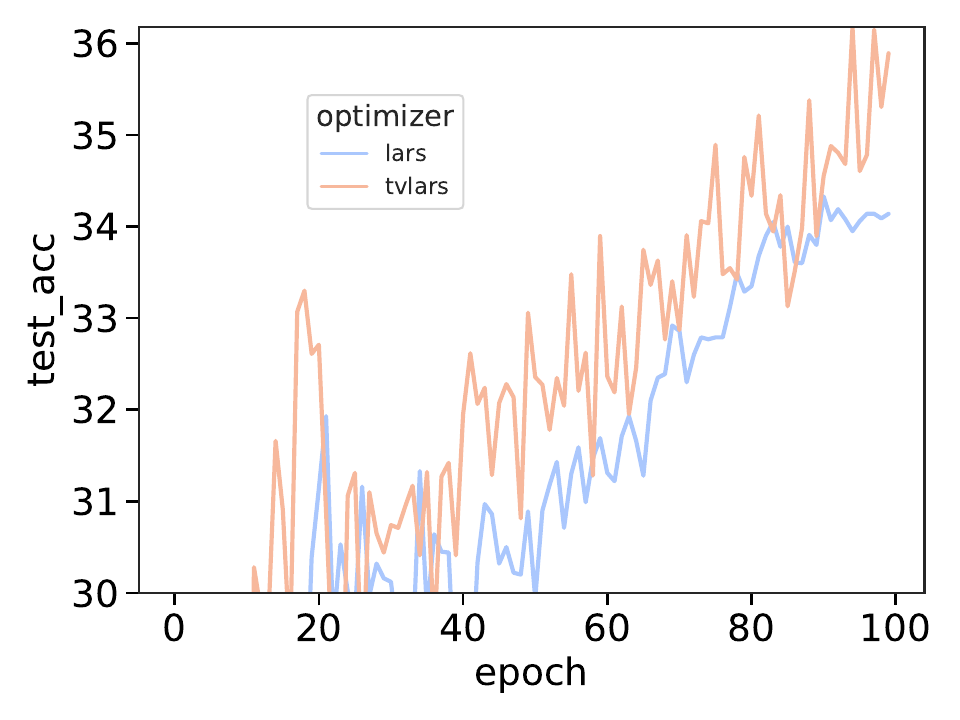}   
         \end{subfigure}
         \begin{subfigure}[b]{0.49\textwidth}
             \includegraphics[width=\textwidth]{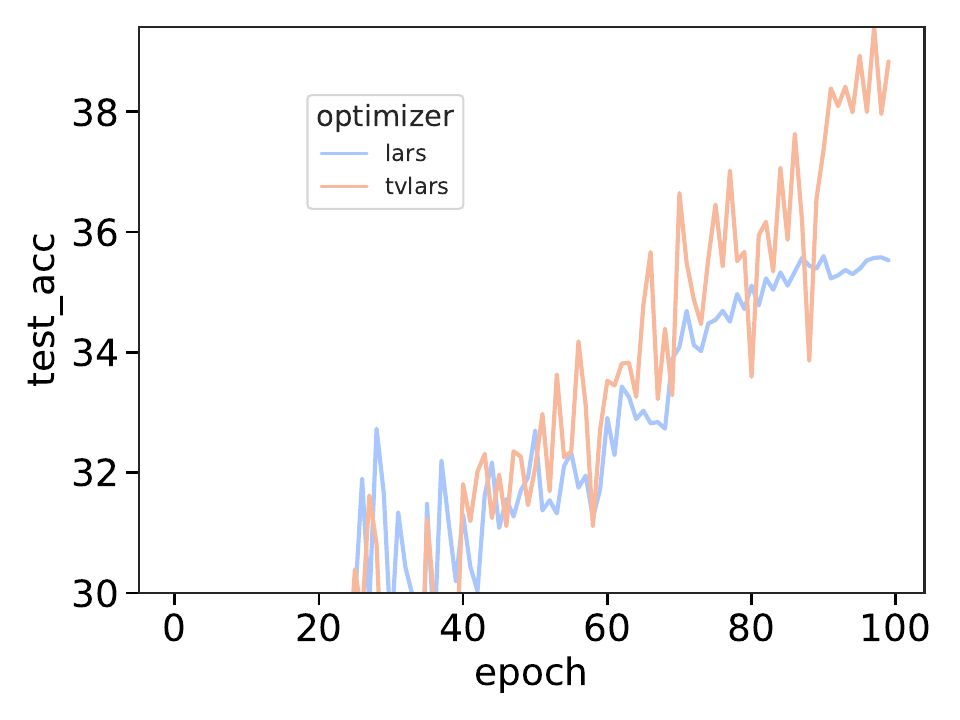}
         \end{subfigure}
         \caption{Kaiming Uniform}
         \label{fig:abs-winit-ku}
     \end{subfigure}
     \caption{Quantitative analysis of different weight initialization methods for CIFAR10 and TinyImageNet (upper and lower rows). For each method, $\mathcal{B}\in\{8192, 16384\}$ (left, right columns)}. 
     \label{fig:abs-winit}
     \vspace{-0.5cm}
\end{figure}

\subsubsection{Learning rate}\label{exp:abs-lr}
A high initial learning rate (LR), otherwise, plays a pivotal role in enhancing the model performance by sharp minimizer avoidance \cite{2018-LR-SharpMinia, 2017-ShartMinima-Generalize}. Authors of \cite{2022-LR-Decay1, 2014-DL-WeirdTrick} suggest that, when $\mathcal{B}/\mathcal{B}_{\rm base} = m$, the LR should be $\epsilon\sqrt{m}$ to keep the variance, where $\epsilon$ is the LR used with $\mathcal{B}_\textrm{base}$. However, choosing $\epsilon$ is an empirical task, hence we do not only apply the theorem from \cite{2018-LR-SharpMinia} but also conduct the experiments with LRs in a large variation to analyze how LR can affect the model performance. Figure \ref{fig:abs-testlr-tvlars} illustrates that the higher the LR is the lower the loss and the higher the accuracy the model can achieve. 

\subsubsection{Weight Initialization}\label{exp:abs-winit}
According to \cite{2017-DL-LARS}, the weight initialization is sensitive to the initial training phase. From Equation (\ref{eq:LARS}), when the value of $\gamma_t^k$ is high due to the ratio $\mathcal{B}/\mathcal{B}_{\rm base}$ (i.e. $\mathcal{B} = 16K$), the update magnitude of $\|\gamma_t^k \nabla\mathcal{L}(w_t^k)\|$ may outperform $\|w_t^k\|$ and cause divergence. Otherwise, since $w \sim \mathcal{P}(w_0)$ (weight initialization distribution), which makes $\Vert w\Vert$ varies in distinguished variation, hence the ratio LNR $\|w\|/\|\nabla\mathcal{L}\|$ may make the initial training phase performance different in each method of weight initialization. Addressing this potential phenomenon, apart from Xavier Uniform \cite{xavier-winit}, which has been shown above, we conduct the experiments using various types of weight initialization: Xavier Normal \cite{xavier-winit} and Kaiming He Uniform, Normal \cite{kaimhe-winit}. It is transparent that, the model performance results using different weight initialization methods are nearly unchanged. TVLARS, though its performance is unstable owing to its exploration ability, outperforms LARS $1\sim3\%$ in both CIFAR10 and Tiny ImageNet.

\section{Conclusion}
In this paper, we have proposed a new method, called TVLARS, for large batch training (LBT) in neural networks, which outperforms the state of the art in LBT by addressing the current shortcomings of the existing methods. In particular, we first conducted extensive experiments to gain deeper insights into layerwise-based update algorithms to understand the causes of the shortcomings of layerwise adaptive learning rates in LBTs. Based on these findings, we designed TVLARS to capitalize on the observation that LBT often encounters sharper minimizers during the initial stages. By prioritizing gradient exploration, we facilitated more efficient navigation through these initial obstacles in LBT. Simultaneously, through adjustable discounts in layerwise LRs, TVLARS combines the favorable aspects of a sequence of layerwise adaptive LRs to ensure strong convergence in LBT and overcome the issues of warm-up. With TVLARS we achieved significantly improved convergence compared to two other cutting-edge methods, LARS and LAMB, especially combined with warm-up and when dealing with extremely LB sizes (e.g., $\mathcal{B}=16384$), across Tiny ImageNet and CIFAR-10 datasets.

\bibliographystyle{ieeetr}
\bibliography{tai-2024/ref}
\end{document}